\documentclass[journal]{IEEEtran}
\usepackage{xurl} 
\usepackage{algorithm}
\usepackage{algorithmic}
\usepackage[switch]{lineno}
\usepackage{verbatim}
\usepackage{subcaption}
\usepackage{multirow}
\usepackage{makecell}
\usepackage{pgfplots}
\usepgfplotslibrary{fillbetween}
\pgfplotsset{compat=newest}
\usepackage{pgfplotstable}
\usepackage{caption}
\usepackage{subcaption}
\usetikzlibrary{patterns}
\usepackage{soul}
\usepackage{amssymb}
\usepackage{booktabs}
\usepackage{amsthm}
\usepackage{amsmath}

\newtheorem{theorem}{Theorem}
\newtheorem{assumption}{Assumption}
\usepackage{pifont}
\newcommand{\cmark}{\ding{51}}%
\newcommand{\xmark}{\ding{55}}%

\usepackage{orcidlink}

\hyphenation{op-tical net-works semi-conduc-tor}

\begin{document}
%
\title{FedUP: Efficient Pruning-based Federated Unlearning for Model Poisoning Attacks}

%
%

\author{Nicolò~Romandini* \orcidlink{0000-0002-2820-5978},
        ~Cristian~Borcea \orcidlink{0000-0003-0020-0910},
    ~Rebecca~Montanari~\orcidlink{0000-0002-3687-0361},
        ~Luca~Foschini \orcidlink{0000-0001-9062-3647}
\thanks{Manuscript received ; ; . }%
\thanks{Nicolò Romandini, Rebecca Montanari, and Luca Foschini are with the Department of Computer Science and Engineering (DISI), University of Bologna, Bologna, Italy (e-mail: \{name.surname\}@unibo.it).\\
Cristian Borcea is with the New Jersey Institute of Technology, Newark, New Jersey, USA  (email: borcea@njit.edu).
}%
\thanks{Asterisk
indicates the corresponding author}}

%
%

\markboth{Journal of \LaTeX\ Class Files,~Vol.~14, No.~8, August~2015}%
{Shell \MakeLowercase{\textit{et al.}}: Bare Demo of IEEEtran.cls for IEEE Journals}
%



\maketitle


\begin{abstract}
    Federated Learning (FL) can be vulnerable to attacks, such as model poisoning, where adversaries send malicious local weights to compromise the global model. Federated Unlearning (FU) is emerging as a solution to address such vulnerabilities by selectively removing the influence of detected malicious contributors on the global model without complete retraining. However, unlike typical FU scenarios where clients are trusted and cooperative, applying FU with malicious and possibly colluding clients is challenging because their collaboration in unlearning their data cannot be assumed. This work presents FedUP, a lightweight FU algorithm designed to efficiently mitigate malicious clients' influence by pruning specific connections within the attacked model. Our approach achieves efficiency by relying only on clients' weights from the last training round before unlearning to identify which connections to inhibit. Isolating malicious influence is non-trivial due to overlapping updates from benign and malicious clients. FedUP addresses this by carefully selecting and zeroing the highest magnitude weights that diverge the most between the latest updates from benign and malicious clients while preserving benign information. FedUP is evaluated under a strong adversarial threat model, where up to $50\% - 1$ of the clients could be malicious and have full knowledge of the aggregation process. We demonstrate the effectiveness, robustness, and efficiency of our solution through experiments across IID and Non-IID data, under label-flipping and backdoor attacks, and by comparing it with state-of-the-art (SOTA) FU solutions. In all scenarios, FedUP reduces malicious influence, lowering accuracy on malicious data to match that of a model retrained from scratch while preserving performance on benign data. FedUP achieves effective unlearning while consistently being faster and saving storage compared to the SOTA.
\end{abstract}

\begin{IEEEkeywords}
Federated Learning; Federated Unlearning; Malicious Clients; Pruning; Adversarial Attacks; Attack Mitigation.
\end{IEEEkeywords}

%
\IEEEpeerreviewmaketitle

\section{Introduction}
\label{introduction}

\IEEEPARstart{T}{he}  distributed nature of Federated Learning (FL) makes it susceptible to various attacks, including model poisoning attacks, where an adversary injects malicious local weights to disrupt the global model or poison it by introducing backdoors\cite{tolpegin2020data,xia2023poisoning}. 
Malicious actors are becoming increasingly skilled at executing stealth attacks \cite{typhoon}. The risk escalates significantly when critical public infrastructures like power plants are targeted \cite{usa}. Many techniques have been developed to detect these attacks \cite{li2019abnormal}. 
However, after detection, recovering from such attacks can be costly and time-consuming, often necessitating retraining the model from scratch. {\color{black} While retraining from scratch is a possible solution, it introduces substantial delays, as it requires restarting the entire training process. During this time, the compromised model cannot be safely used, and clients must wait for a new, clean version. This interruption can degrade service availability and user trust. } This highlights the need to design mechanisms to alleviate the burden of dealing with such situations.

Federated Unlearning (FU) \cite{wu2022federated} has emerged as a solution to these problems. FU refers to the process of selectively removing the impact of specific data contributions from an FL model without the necessity of complete retraining. Unlike traditional Machine Unlearning (MU) \cite{bourtoule2021machine}, which is applied to centralized models, FU addresses the unique challenges posed by the decentralized nature of FL. Effective FU techniques aim to remove specific data contributions efficiently while retaining the valuable knowledge gained from the remaining data. 
However, applying FU algorithms in a malicious context differs from the typical exploitation of FU in scenarios where clients are considered trusted and cooperative \cite{wu2022federatedkd,kassab2022federated,10269017}.
Some work \cite{10179336,guo2023fast} propose solutions in a malicious context, but they require the server to retain all client updates received during the training process, using them for performing unlearning. 
{\color{black} Storing gradients and model updates introduces for the entire duration of training both storage inefficiencies and substantial privacy risks, as a server breach could allow adversaries to exploit the stored updates to reconstruct private client data \cite{geiping2020inverting}.}
The challenge that we address is how to apply FU efficiently in a scenario with malicious clients, where it is not possible to rely on the collaboration of adversaries whose data needs to be unlearned. 
Many studies have already addressed detecting malicious clients \cite{li2020learning,zhang2022fldetector,jiang2023data}, but detection alone, while important, is not sufficient.
{\color{black}}
Once a malicious client is identified, the still open research question is how to remove its contribution from the FL model as soon as possible to minimize damage.
{\color{black} 
Moreover, the unlearning algorithm itself could be the target of attacks \cite{sheng2024robust, chen2025fedmua}. For example, an attacker could perform a Denial of Service (DoS) attack by repeatedly triggering the unlearning procedure to disrupt and potentially hijack the training process.

To address all these challenges,} we present \textbf{FedUP}, an algorithm to efficiently mitigate the influence of {\color{black} multiple} colluding malicious clients on the global FL model by pruning specific connections within the attacked model. FedUP is designed to start unlearning as soon as a malicious client is detected.
Our approach is practical because the server uses only the weights of the local models sent by the clients to identify which connections to inhibit. To achieve efficiency, FedUP relies exclusively on the clients' weights from the last training round before the unlearning phase. 
{\color{black} Identifying and removing malicious contributions is particularly challenging due to the complex overlap of weight updates across clients and the risk of destabilizing the model. Not all weights contribute equally to the model's behavior, as some have a stronger influence than others. Simply choosing the highest magnitude weights of the malicious clients does not yield satisfactory results, as shown in Section~\ref{sec:ablation}. Our algorithm addresses this by computing the difference between benign and malicious client updates, focusing on high-magnitude weights that exhibit conflicting modifications. These weights are more likely to affect the model's decision boundaries and learning dynamics. Pruning is applied selectively to specific layers, since removing connections across the entire network would compromise stability and generalization. This careful targeting ensures that only the most harmful and influential connections are removed, allowing the algorithm to eliminate malicious influence while preserving benign knowledge and maintaining model performance. FedUP is executed directly by the server once malicious clients are detected. It also incorporates a rate limit mechanism against DoS attacks on the FU procedure.}

Summing up, the main contributions of our work are the following:
\begin{itemize}
{\color{black}
\item We propose an effective server-side-only FU algorithm that recovers a poisoned global model by simultaneously removing one or multiple malicious contributions.
\item FedUP operates under a strong adversarial threat model in which up to $50\% - 1$ of the clients may be malicious. It is also resilient to emerging attacks that target FU, such as DoS attacks.}
\item The solution is lightweight and storage-efficient, relying solely on the model weights collected during the last training round before unlearning.
\item Our algorithm is task-agnostic and can be seamlessly integrated into any FL framework without requiring modifications.
\item We provide technical insights to support optimal fine-tuning of the algorithm's parameters.
\item We demonstrate the effectiveness and efficiency of our solution through extensive experiments {\color{black} involving three different models and two datasets, evaluated under both IID and Non-IID settings.} Comparisons with State-of-the-Art (SOTA) FU approaches show that FedUP consistently achieves successful unlearning while being significantly faster than {\color{black}SOTA} methods.
\end{itemize}

The rest of the paper is organized as follows. Section~\ref{sec:related} section highlights the current state of the art in FU and explains how our solution differs from and improves upon it. {\color{black}Section~\ref{sec:threat} defines the threat model, detailing the attacker’s goals, knowledge, and capabilities.} Section~\ref{sec:fedup} provides a detailed description of the FedUP algorithm. Section~\ref{sec:math} elaborates on key factors and hyperparameters that influence the efficiency and effectiveness of the procedure. Section~\ref{sec:res} showcases the results of our extensive experiments. Finally, Section~\ref{sec:conclusion} summarizes our findings.

\section{Related Work}
\label{sec:related}
{\color{black} Adversaries can launch targeted poisoning attacks by manipulating training to degrade model performance or inject malicious behaviors. There are two common approaches. The first one is label-flipping \cite{jiang2023data,jebreel2024lfighter}, where the attacker intentionally mislabels training data to bias the model’s decision boundary. The second approach is backdoor attacks \cite{xie2019dba,bagdasaryan2020backdoor}, in which a trigger pattern is embedded into training samples so that the model learns to associate this pattern with a target label. This enables the attacker to manipulate predictions at inference time whenever the trigger is present. 
In light of these attacks, this section focuses on related work topics relevant to contextualizing FedUP. We review techniques for attack detection, as FedUP depends on an effective detection mechanism. Additionally, we discuss FU approaches in both trusted settings, where unlearning is collaborative, and untrusted environments, where FU acts as a mitigation against adversarial contributions. Finally, we consider attacks targeting FU  itself.

\subsection{Detection Techniques}
\label{sec:detect}
Detecting malicious clients in FL involves differentiating adversarial updates from legitimate ones to safeguard the global model. Broadly, detection strategies fall into two main groups \cite{huang2024federated,mu2024feddmc}: those that leverage clean validation datasets and those based on clustering or analyzing the similarity of model updates. 
Some approaches, such as FLTrust \cite{cao2020fltrust}, maintain a trusted dataset on the server side to evaluate client updates. Clients whose updates closely match the server’s model receive higher trust scores, which helps filter out malicious contributions. Similarly, Li et al. \cite{li2020learning} use variational autoencoders trained on clean validation samples to detect anomalous updates. Despite their effectiveness, these methods require access to clean and representative data, a condition often difficult to meet in real FL scenarios. Other methods focus on grouping or measuring the similarity between client updates. 
DnC \cite{fung2020limitations} applies dimensionality reduction followed by spectral analysis to detect malicious clients but requires prior knowledge of the number of adversaries. Auror \cite{shen2016auror} uses K-means clustering, while FoolsGold \cite{fung2020limitations} identifies attacks based on the similarity and diversity of client contributions. In \cite{gu2021detecting}, the authors propose FedCVAE, an unsupervised framework that utilizes a conditional variational autoencoder on the server side. By leveraging reconstruction errors as anomaly scores, FedCVAE dynamically filters out malicious updates. FedPRC \cite{ma2022personalized} combines clustering with density-based anomaly detection to exclude malicious updates effectively. 
FLDetector \cite{zhang2022fldetector} identifies malicious clients by examining the consistency of their model updates. A limitation of this method is its high computational cost. LFighter \cite{jebreel2024lfighter} is a detection method that analyzes gradient patterns in the output layer, exploiting discrepancies between malicious and honest updates. LFighter clusters and filters updates before aggregation, showing strong performance across different data distributions and model sizes. Finally, FedDMC \cite{mu2024feddmc} reduces dimensionality using PCA, employs binary tree clustering to mitigate noise, and incorporates a self-ensemble module to enhance detection robustness.
{\color{black}
Existing detection techniques become ineffective when malicious clients constitute the majority, as they rely on a benign majority to identify adversarial updates reliably \cite{li2020learning,zhang2022fldetector,huang2024federated}. Moreover, in such scenarios, even existing robust aggregation methods may struggle to converge \cite{cao2021provably, le2023privacy, mu2024feddmc}. 
Therefore, FedUP also assumes a benign majority and is designed to operate effectively under this assumption.}
\subsection{FU in Benign Environments}
Most existing works on FU are developed under benign settings, assuming honest participation from all clients. While our focus is primarily on unlearning in adversarial contexts, we briefly mention two notable approaches in benign scenarios.
For a more exhaustive overview of FU in benign settings, we refer the reader to the survey \cite{romandini2024federated}.
The first solution is FedEraser~\cite{9521274}. It achieves client unlearning by leveraging historical parameter updates stored at the server to speed up the retraining process. Specifically, FedEraser conducts a calibration phase that iteratively sanitizes updates by only including the retained clients. Each round involves local training with calibrated updates and server-side aggregation, producing a sanitized global model to be broadcast in the next re-calibration round, continuing until all past updates are recovered. In addition to the need to store numerous client updates, the algorithm involves a recovery process that can require up to as many rounds as the training process itself. Unlike FedEraser, FedUP only requires the weights from the last round before unlearning and involves a recovery process that needs only a few additional rounds.

The second work in benign scenarios is \cite{wang2022federated}, in which the authors present a method to forget specific categories in Convolutional Neural Networks (CNN) models within FL. Using Term Frequency-Inverse Document Frequency (TF-IDF) scores, the method quantifies channel discrimination, pruning those highly associated with target categories to achieve unlearning. Fine-tuning follows pruning to restore model performance. This approach is specifically designed for CNNs, which limits its applicability to other model types. Additionally, it requires clients to compute and transmit extra information to the server to identify the most significant channels, making it unsuitable for use in malicious contexts. FedUP relies solely on updates from the most recent training round and is applicable to any neural network.

{\color{black} The first work is used in our experimental evaluation for comparison, as it is one of the few approaches that offer publicly available and functional code. Although the second work proposes a method similar to ours, it does not provide open-source code, which hinders a direct comparison.}

\subsection{FU as a Mitigation Strategy in Malicious Settings}


Few works specifically address the unique challenges posed by a malicious context.
{\color{black}
In \cite{wu2022toward}, the authors propose a pruning method that removes neurons responsible for triggering misbehavior when backdoor patterns are detected. Although the method is designed for adversarial settings, it requires clients to compute additional information, which malicious clients are typically unwilling to provide or may deliberately falsify. Furthermore, the method is applicable only to specific types of attacks and is limited in that it cannot effectively remove multiple malicious clients simultaneously. In contrast, FedUP requires no additional information beyond the models from the final training round and can be applied across a wide range of scenarios, including non-adversarial contexts, as demonstrated by the experimental results. Moreover, FedUP is capable of removing multiple malicious clients at the same time.}

FedRecover~\cite{10179336} aims to recover a poisoned model attacked by malicious clients using historical information. It employs retained local model updates to estimate the updates that would be produced during a retrain-from-scratch process, using the Cauchy mean theorem and the L-BFGS algorithm \cite{L-BFGS} to approximate the integrated Hessian matrix efficiently. To improve the model's performance, FedRecover implements fine-tuning strategies, including warm-up, periodic correction, abnormality fixing, and final tuning phases. Storing all client updates during training and running the L-BFGS algorithm could be inefficient both in terms of storage and time. In contrast, FedUP only requires storing the weights from the models received in the last round before unlearning. Additionally, the pruning process consists of simply setting the selected weights to zero, thereby ensuring efficiency.
FAST~\cite{guo2023fast} is a protocol designed to mitigate the influence of Byzantine participants on the global model in FL. The server retains all client updates and adjusts the model parameters by subtracting the contributions of malicious clients from the final global model in each training round. The server compares the accuracy of the current unlearning model with the previous one, continuing the unlearning process if the current model outperforms its predecessor. This process is repeated until the maximum number of attempts is reached. To address potential loss in performance, the server uses an additional small benchmark dataset for supplementary training, enhancing the overall accuracy. Similar to previous work, storing all updates is inefficient in terms of storage {\color{black} and raises privacy concerns due to model and gradient inversion attacks.} Furthermore, in many cases, the server may not have access to additional datasets, which limits the applicability of such algorithms. In contrast, FedUP does not require any extra data sources.

{\color{black}
In \cite{wu2024unlearning}, the authors propose a novel approach designed explicitly for backdoor removal in FL systems. Their method combines historical update subtraction and knowledge distillation to eliminate malicious clients' influence while preserving the global model's performance. The approach does not require additional clients' participation rounds and is compatible with various neural network architectures. However, the server requires additional data, which is not always available. In contrast, FedUP does not need supplementary data, making it more suitable for scenarios where access to external data is not possible or desirable.
Lastly, in \cite{khalil2025not} the authors propose NoT, a FU method that removes a participant’s contribution from a model without requiring access to the original data or additional storage. The approach leverages weight negation to disrupt the learned parameters while retaining the model’s capacity to recover. Although NoT is not explicitly designed for removing malicious updates, the authors demonstrate its effectiveness against backdoor attacks. However, because the negation is not targeted at specific contributions, it can excessively disrupt the model, resulting in a higher number of recovery rounds, as observed in our comparison in Section~\ref{sec:res}.

}

\begin{table}
    \centering
    \resizebox{\linewidth}{!}{
    \begin{tabular}{lccccc}
    \toprule
         \textbf{Work}&\textbf{\makecell{Malicious\\ Clients}} &\textbf{\makecell{No Historical\\Updates}} &\textbf{\makecell{Multiple\\Unlearning}} &\textbf{\makecell{No Extra\\Data}} 
         &\textbf{\makecell{DoS\\Resilient}}\\
         \midrule
         \cite{wang2022federated}& \xmark  & \cmark & \xmark & \xmark & \xmark\\
         \cite{9521274}& \xmark & \xmark & \xmark & \cmark & \xmark\\
         \cite{wu2022toward}& \cmark & \xmark& \xmark&\cmark& \xmark\\
         \cite{10179336}& \cmark& \xmark& \xmark&\cmark& \xmark\\
         \cite{guo2023fast}& \cmark & \xmark& \xmark&\cmark& \xmark\\
         \cite{wu2024unlearning}& \cmark & \xmark& \cmark&\xmark& \xmark\\
         \cite{khalil2025not}& \cmark & \cmark& \xmark&\cmark& \xmark\\
         \textbf{Ours} & \cmark& \cmark & \cmark&\cmark&\cmark\\
         \bottomrule
    \end{tabular}
    }
    \caption{Comparison with related work.}
    \label{tab:related}
\end{table}

{\color{black} No described method provides open-source code, but we include NoT in our comparison since its simple design enabled us to reimplement it for our experimental evaluation.} Table~\ref{tab:related} summarizes the main novelty of FedUP compared to the most relevant related work. The columns highlight key features, including the ability to handle malicious clients, avoid storing historical updates, unlearn multiple contributions simultaneously, operate without requiring extra data beyond the models, and resist DoS attacks. To the best of our knowledge, FedUP is the first algorithm that enables the simultaneous removal of contributions from one or more malicious clients, requiring only the models from the last training round before unlearning, and is inherently resistant to DoS attacks by design.

\subsection{Attacks Targeting FU}

Recent studies have investigated how unlearning mechanisms can be maliciously exploited \cite{sheng2024robust,chen2025fedmua}. In these contexts, malicious clients may aim to trigger the unlearning algorithm to disrupt the global model and cause it to forget valuable knowledge. Alternatively, malicious clients might try to continuously activate the unlearning algorithm to hijack the training process, effectively launching a DoS attack that prevents stable model convergence. Most of the works in the literature still lack protection mechanisms against these threats. {\color{black} FedUP is the first work to present a mechanism for preventing DoS attacks on the FU process.}
}

{\color{black}
\section{Threat Model}
\label{sec:threat}

This section describes the threat model considered in our setting and outlines the main assumptions. Therefore, we clarify the adversary’s goals, knowledge, and capabilities. Our threat model is consistent with those presented in related works \cite{10179336,wu2022toward, wu2024unlearning}.
\subsubsection{Adversary and Adversary’s Goals}
We consider a scenario where a group of malicious {\color{black}and possibly colluding} clients collaborate to compromise the training process in an FL system. These clients can manipulate their local updates, such as gradients or model weights, before sending them to the central server.  We considered targeted poisoning attacks, where the adversary aims to cause specific misclassifications in the global model. 
In our scenario, malicious clients have already participated in multiple training rounds, embedding malicious updates into the global model. {\color{black} The server is assumed to be honest and is equipped with a mechanism to detect malicious behavior during the training process.} Our objective is to neutralize the malicious influence already embedded in the global model without degrading performance on benign tasks. 
Beyond the typical objective of poisoning the model, malicious clients can also perform a DoS attack that targets the unlearning mechanism. In this case, the adversary attempts to repeatedly trigger the unlearning process in order to hijack the training pipeline. This can result in disrupted convergence, excessive resource consumption, and overall system instability.





\subsubsection{Adversary’s Knowledge}
We assume that adversaries possess knowledge of the FL process, the training algorithm, and the global model architecture. They are capable of crafting malicious updates tailored to the training objective. However, they do not have access to the local data or updates of other participants. This reflects a realistic threat model where attackers operate under limited information and cannot directly observe or influence the contributions of benign clients.


\subsubsection{Adversary’s Capabilities}
Adversaries are assumed to control up to $50\% - 1$ of the total participants, which aligns with adopted threat models in the literature \cite{jiang2025towards}. These malicious clients can upload manipulated model updates during the training process. However, adversaries cannot control the aggregation algorithm or the server's behavior, which is assumed to be honest and to execute the protocol correctly.


}

\begin{figure}
    \centering
\includegraphics[width=0.8\linewidth]{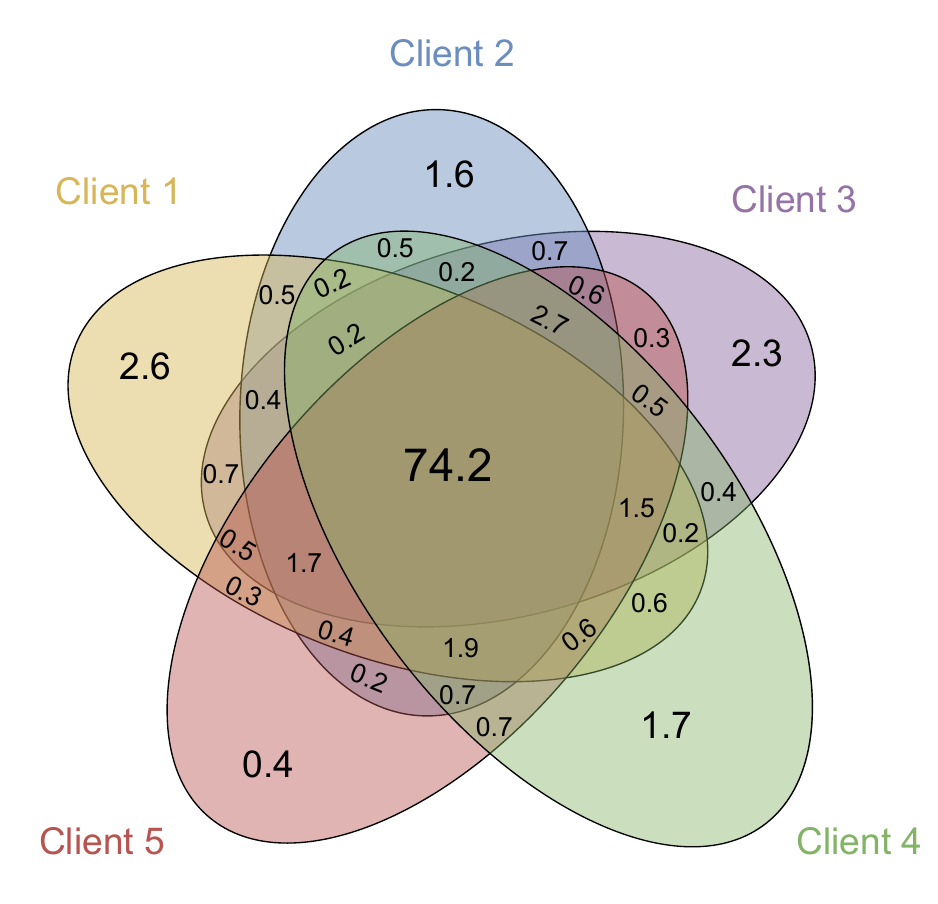}
    \caption{\color{black}Percentage of weights influenced by each client in a deep convolutional layer of MobileNetV2 after one training round on CIFAR-10.}
    \label{fig:venn}
\end{figure}
\begin{figure*}[h!!!]
    \centering
\includegraphics[width=0.9\textwidth]{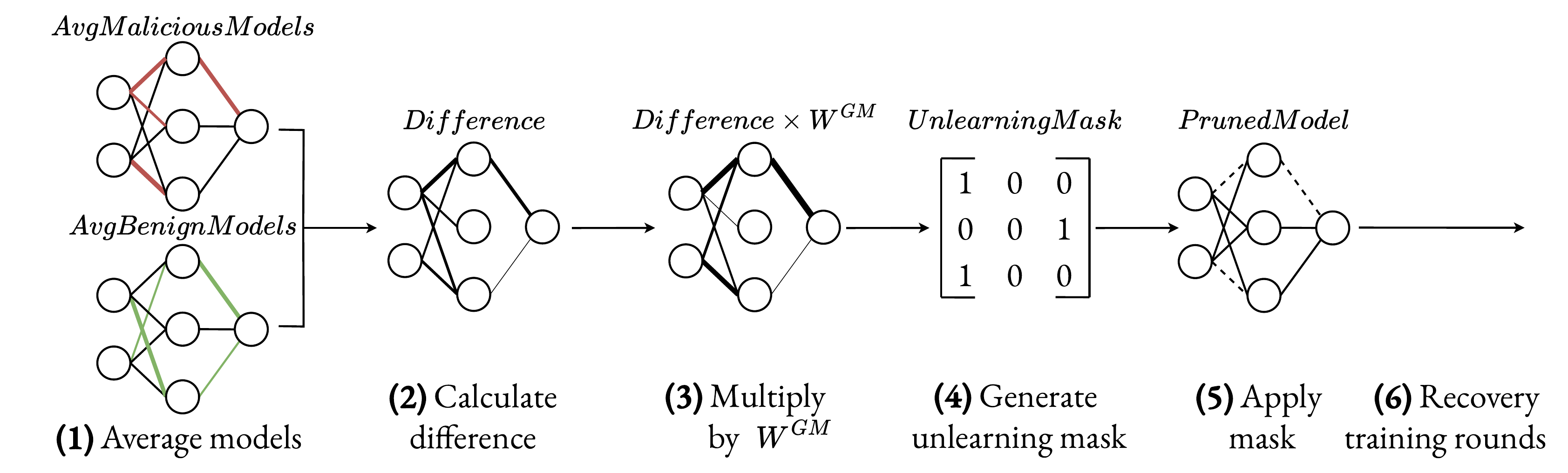}
    \caption{Unlearning procedure after detecting malicious clients. 
    }
    \label{fig:unlearning}
\end{figure*}

\section{FedUP Design}
\label{sec:fedup}
Our algorithm leverages one-shot soft pruning \cite{he2018soft,cai2021softer}, 
which allows the model to zero out specific weights while retaining the potential to retrain and recover them if needed.
For simplicity, from this point, we will refer to soft pruning as pruning. However, unlike traditional pruning techniques that focus on identifying and removing less relevant connections to make the neural network more efficient, our algorithm reverses this principle entirely. Instead of discarding the less critical connections, we eliminate the most important ones that significantly influence the network's output, contributing more to the model's predictive power. 
Specifically, FedUP zeroes out the salient weights identified in the local models sent by malicious clients from the global model. 
{\color{black} However, identifying the most salient weights is a challenging task. Figure~\ref{fig:venn} illustrates how each client influences different subsets of the model’s weights in each training round. As shown, there is a significant overlap among the weights affected by different clients, which complicates the isolation of individual contributions and the accurate identification of malicious updates. 
Nevertheless, } pruning effectively removes any potentially malicious contributions contained in these connections, thereby neutralizing the influence of the malicious clients. 
Let us notice that FedUP can be applied to remove multiple malicious clients' contributions simultaneously. The process remains unchanged regardless of the number of malicious clients. 

Figure~\ref{fig:unlearning} presents the unlearning procedure.
It is important to outline that the FedUP unlearning algorithm does not affect traditional training when not activated. 
The unlearning process is executed by the server only when it detects the presence of one or more malicious clients.
First, the server creates two separate reference models: one by averaging benign models and the other by averaging malicious ones (1).
The step is designed to enhance efficiency, especially when dealing with a large number of clients, by minimizing the need for extensive pairwise comparisons.
Then, it computes the difference between the averaged models to identify the conflicting weights between the benign and malicious clients (2). However, weights in neural networks are not equally important. To account for this, weight differences are scaled by their global model values (3). After that, it selects the highest magnitude weights that differ the most to create an unlearning mask (4). The mask is then applied to the model produced by averaging the benign models (5).
This step allows for the exclusion of the malicious clients' models and their contribution in that round as well. However, some pruned weights might include benign information from non-malicious clients. As a result, pruning can generally lead to a reduction in the model's performance. To recover the performance lost, FedUP incorporates additional training rounds after pruning. Finally, the server sends the pruned global model to the remaining clients to perform recovery training rounds to restore the performance (6).

FedUP seamlessly works whether clients send models, like in FedAVG~\cite{mcmahan2017communication}, or gradients, like in FedSGD, as the information remains consistent in both cases. The most dissimilar weights are the same ones with the most dissimilar gradients.
Moreover, our method does not require saving or receiving any additional information apart from the models or gradients from the clients. This allows for easy integration into any FL framework.
The same goal of FedUP can be achieved through {\color{black}natural} forgetting~\cite{french1999catastrophic}, but at the cost of numerous additional rounds that make the {\color{black}natural} forgetting process inefficient when the model has already converged~\cite{gao2024verifi}.
{\color{black}
Therefore, the underlying intuition of our approach is to strategically perturb the model’s current state by pruning selected connections, thereby guiding it away from configurations influenced by malicious contributions.} This accelerates {\color{black}natural} forgetting and forces the model to reconfigure itself while discarding the data to be forgotten during the recovery rounds.
\begin{figure}[t]
    \centering
        \centering
        \input{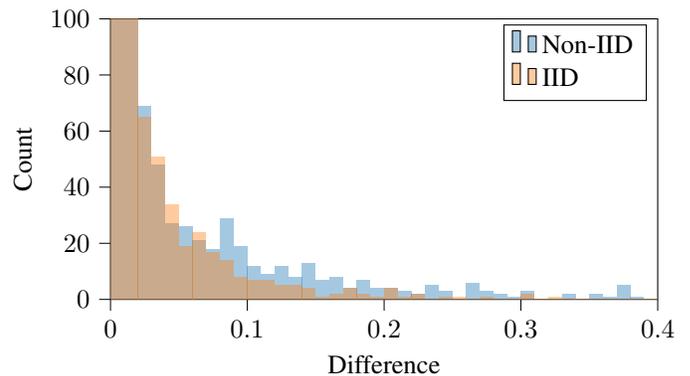} 
        \caption{Normalized squared difference between averaged benign and malicious weights {\color{black} with 20 clients and 6 malicious.}}
    \label{fig:weights_diff}
\end{figure}
\subsection{Identifying Most Important Connections}
Our objective is to identify the most important weights for a specific client using only the latest updates sent by the clients. {\color{black} These identified weights determine the unlearning mask applied to the global model. Details of the procedure are provided in Algorithm~\ref{alg:detect}. } Many studies on weight importance are designed for centralized contexts \cite{yu2018nisp,molchanov2019importance}. Identifying critical connections for specific clients in FL is challenging because multiple clients can update the same weight simultaneously. The variation in weights between rounds helps to understand the direction in which a client aims to drive the global model \cite{gill2023provfl,jebreel2024lfighter}. By applying this principle to detect which weights are primarily influenced by each group of clients, we compute the squared difference between benign and malicious updates (line 5 of Algorithm~\ref{alg:detect}).
Figure~\ref{fig:weights_diff} illustrates the squared differences between the averaged benign and malicious models in a CNN after one training round under both IID and Non-IID settings. In both cases, malicious clients also possess benign data that makes them more stealthy. As shown, most weights experience minimal changes. 
However, malicious clients are more likely to have weights that diverge significantly from benign clients, making their influence more noticeable.
As said, weights in a neural network contribute differently. Those with higher magnitudes typically have a greater influence on the network's behavior \cite{NIPS1989_6c9882bb}. Therefore, the magnitude of the weight being modified must also be considered. For this reason, the difference between the weights is multiplied by the weight value in the global model (line 6 of Algorithm~\ref{alg:detect}). This adjustment prioritizes the most significant weights both for the client to unlearn and for the model itself, making the pruning process more efficient and effective.

\subsection{Unlearning Procedure}
This procedure can be executed as soon as the server detects the presence of one or more malicious clients. This can also happen multiple times during the FL process. 
First, the server calculates the unlearning mask by selecting each layer's most important weights. The steps are presented in Algorithm~\ref{alg:detect}. The percentage of weights $\mathcal{P}$ to be pruned could be estimated based on the complexity of the model and the data distribution. More details are discussed in Section~\ref{sec:thr}.
Then, the server uses the unlearning mask to zero out the weights in the model produced by averaging the benign models (line 8 of Algorithm~\ref{alg:masking}). As said, applying the mask on this model already removes the contribution sent by malicious clients in the last training round. 
Algorithm~\ref{alg:masking} shows the procedure to apply the mask.
Another important consideration is regarding the layers on which the mask is applied. Not all layers have the same characteristics, and indiscriminately inhibiting weights in every layer can significantly degrade the network's performance. For this reason, pruning is limited to fully connected and convolutional layers (line 7 of Algorithm~\ref{alg:masking}), as seen in other works \cite{han2015learning,frankle2018lottery,wu2021differential}. 
The result of this operation is a network that has become more or less sparse, depending on the percentage of the pruned weights. In centralized contexts, it has already been shown that sparsity can facilitate data unlearning \cite{liu2024model}. At this stage, the network has lost part of its prediction capabilities and shifted away from the potential local minimum it had reached after convergence.

\begin{algorithm}[tb]
\caption{Generation of the  Unlearning Mask}
\label{alg:detect}
\textbf{Input}: $localModels^t, globalModel^{t-1},  \mathcal{P}$\\
\textbf{Output}: $mask$
\begin{algorithmic}[1] 
\STATE $maliciousModels \gets [localModel_i \mid i$ \textbf{is} $malicious]$
\STATE $avgMaliciousModels \gets Avg(maliciousModels) $
\STATE $benignModels \gets [localModel_i \mid i$  \textbf{is not} $malicious]$
\STATE $avgBenignClients \gets Avg(benignModels) $
\STATE $difference \gets (avgMaliciousModels - avgBenignModels)^2$
\STATE $rank \gets difference \times globalModel^{t-1}$
\STATE $mask \gets [$ $]$
\FOR{$i = 0$ \TO $len(rank)$}
\STATE $rankSorted \gets sortDescending(rank[i])$
\STATE $weights \gets choose(rankSorted, \mathcal{P})$
\STATE $mask.append(weights)$
\ENDFOR
\STATE \textbf{return} $mask$
\end{algorithmic}
\end{algorithm}

\begin{algorithm}[tb]
\caption{Apply Mask on Global Model}
\label{alg:masking}
\textbf{Input}: $mask, avgBenignClients$\\
\textbf{Output}: $prunedGlobalModel$
\begin{algorithmic}[1] 
\STATE $prunedGlobalModel \gets avgBenignClients.copy()$
\STATE $layers \gets prunedGlobalModel.getLayers()$
\FOR{$i = 0$ \TO $len(layers)$}
    \IF{$layer[i] \in [Dense,Convolutional]$}
        \STATE $layer[i][mask[i]] \gets 0$
    \ENDIF
\ENDFOR
\RETURN $prunedGlobalModel$
\end{algorithmic}
\end{algorithm}
\subsection{Restoring Performance}
Pruning is inherently destructive, resulting in the loss of information stored in the zeroed out connections. Each connection stores information from multiple clients in a distributed training context such as FL. Therefore, pruning the weights, even if related to malicious clients, will likely erase some of the information learned from other clients. 
To mitigate this effect, a few additional {\color{black} FL} training rounds performed by the remaining benign clients are sufficient to restore the performance lost during the unlearning procedure. The clients use the same local dataset and configuration as in the standard FL training process, and these recovery rounds remain transparent to them. This operation further strengthens the removal of the excluded client's contribution, as the model will be more finely tuned to the remaining clients. 
Estimating the number of recovery rounds is a complex task, as it is equivalent to estimating the number of rounds a model needs to converge in a standard FL process. However, in Section~\ref{sec:rec}, we provide some considerations that offer insight into the estimation.
{\color{black}
\subsection{Unlearning Rate Limit}
\label{sec:fedup:rate}
To prevent DoS attacks as described in  Section~\ref{sec:related} and Section~\ref{sec:threat}, FedUP enforces a rate limit on the unlearning process. In FedUP, unlearning is never triggered directly by clients but is instead initiated by the server upon detecting new malicious participants. To manage this process, the server defines an unlearning rate threshold $\mathcal{T}$, which sets the minimum number of training rounds that must pass between consecutive unlearning operations. This mechanism ensures that unlearning is not triggered too frequently, protecting the system from repeated and costly recovery operations that could be induced by adversarial activity. FedUP is designed to unlearn the influence of multiple clients simultaneously. If several malicious clients are detected during the period in which unlearning is temporarily disabled due to rate limiting, the server accumulates these detections. Once the threshold $\mathcal{T}$ is reached, FedUP performs a single unlearning step that prunes all identified malicious contributions at once. The threshold can be configured by the system owner to balance responsiveness with system stability, depending on the deployment context and threat landscape. Empirically, as shown in the experiments, the unlearned model tends to restore its original performance within a few rounds following the unlearning phase. Therefore, even a low threshold, such as $T = 10$ rounds, can provide an effective trade-off between mitigating DoS attacks and ensuring timely model recovery. }

\section{Technical Insights}
\label{sec:math}
In this section, we give technical insights and practical guidelines to tune the algorithm properly. 
\subsection{Pruning Most Dissimilar Weights}
\label{sec:prun}
The choice to prune the most dissimilar weights is supported by mathematical evidence, as detailed in the following section.
In FL, each client starts a training round with the global model from the previous round, $ W^{\text{GM}, t-1} $. The client trains this model on its local data, computing updates $ \Delta W^{C_i} $, which adjust the global model to the client’s data. The updated client weights are:  
\begin{equation}
W^{C_i, t} = W^{\text{GM}, t-1} + \Delta W^{C_i}.
\end{equation}
The server aggregates these updates, typically using weighted averaging, to create the new global model for the next round. Here, $ \Delta W^{C_i} $ reflects the client-specific contributions. In a converged model, $ W^{\text{GM}, t-1} $ is near-optimal for the global data distribution, minimizing global loss. For weights already optimized globally, updates $ \Delta w_j^{C_i} $ are small. In contrast, updates for weights critical to reducing a client’s local loss, $ \Delta w_j^{C_i} $, are larger.  At convergence, the difference between benign and malicious weights stabilizes, manifesting two distinct patterns:
\begin{enumerate}
    \item Both parties are satisfied with the weight $w_j$ value, so the updates are nearly zero. These weights usually capture the common knowledge shared by all clients. Malicious clients, for example, may possess benign data to complicate their identification. In this case, the difference is minimal.
    \item Both parties have updated the weight $w_j$ in a conflicting manner. These weights are the contested ones, used by clients to shape the network and minimize the loss on their data. In this case, the difference is more significant.
\end{enumerate}
During training, clients will naturally adjust the weights to align with each other's updates, promoting consensus on those parameters. However, when clients possess malicious data, their update will diverge on a particular weight \cite{jebreel2024lfighter}. This divergence can persist at convergence, given that malicious clients are typically in the minority, as most of the model's knowledge reflects benign clients' contributions.
To identify the weights most targeted by malicious clients, the server computes the differences, $ \Delta w_j^{\text{diff}} $, between all the weights from benign and malicious clients.
Weights with the largest values of $ \Delta w_j^{\text{diff}} $ are the most likely to have been influenced by malicious clients. 

\subsection{Threshold Choice}
\label{sec:thr}
{\color{black}
One of the most crucial aspects for precise and efficient unlearning is choosing an appropriate percentage of weights to prune $\mathcal{P}$. Selecting a value that is too low risks leaving malicious knowledge intact within the model. Conversely, a value that is too high risks pruning too many weights, thereby completely deteriorating the model.
However, determining the optimal pruning percentage is challenging. In most existing works \cite{liu2018rethinking,molchanov2019importance, blalock2020state}, the pruning percentage is typically chosen empirically. To enable a more informed selection, the pruning percentage can be fine-tuned based on the following considerations:
\begin{enumerate} \item \textbf{Weights Separability Between Malicious and Benign Clients ($\mathcal{S}$):} When the divergence between the contributions of benign and malicious clients is more evident, it becomes easier to identify and isolate harmful weights. In this case, slightly more aggressive pruning can be applied without compromising useful knowledge. When separability is low, there is more overlap between benign and malicious updates, making it difficult to prune malicious influence without also affecting benign knowledge. This requires more conservative pruning.
\item \textbf{Ease of Performance Recovery ($\mathcal{R}$): } Another important factor is how quickly the model can regain its performance after pruning. In scenarios where the model architecture and data distribution support efficient relearning, slightly more aggressive pruning can be tolerated since the model will recover in just a few rounds. Conversely, in more fragile setups where performance recovery is slower, pruning should be more conservative to avoid prolonged degradation.
\item \textbf{Natural Forgetting Efficacy ($\mathcal{E}$):} Neural networks naturally forget knowledge that is no longer reinforced through training. Excluding clients from the training process can help accelerate this forgetting. However, the speed remains relatively slow. Moreover, the effectiveness of this mechanism can vary significantly depending on the specific training dynamics. Factors such as model architecture, data distribution, and optimization strategy can all influence how quickly outdated or malicious information is overwritten. When natural forgetting is weak, more explicit pruning is necessary to effectively remove harmful contributions. Conversely, if natural forgetting is efficient, the model can reduce harmful influence with less pruning.
\end{enumerate}
Assuming $\mathcal{S}, \mathcal{R}, \mathcal{E} \in [0, 1]$, where higher values represent greater separability, easier performance recovery, and stronger natural forgetting efficacy, respectively, we propose the following formula to guide the selection of the pruning percentage:

\begin{equation}
\label{eq:P}
    \mathcal{P} \approx \lambda_1 \cdot \mathcal{S} + \lambda_2 \cdot \mathcal{R} + \lambda_3 \cdot (1 - \mathcal{E})
\end{equation}

where $\lambda_1 + \lambda_2 + \lambda_3 = 1$ are weighting coefficients used to balance the influence of each factor, which can be chosen based on empirical observations or specific application needs. 
Although directly estimating $\mathcal{S}$, $\mathcal{R}$, and $\mathcal{E}$ may be challenging, the pruning percentage can be approximated using the degree of data similarity among benign clients, based on the following insights.

In fact, the separability between benign and malicious updates tends to be higher in IID scenarios compared to Non-IID ones. This is because in IID settings, clients’ data distributions are similar, so their model updates follow consistent patterns. Malicious updates therefore stand out more clearly as outliers or divergent patterns, making it easier to identify and isolate harmful weights. Conversely, in Non-IID scenarios, clients hold more diverse and heterogeneous data, leading to greater natural variability in their updates. This overlap between benign and malicious contributions reduces separability and complicates the pruning process.

Moreover, the model's ability to recover performance after pruning also differs between IID and Non-IID settings. In IID scenarios, the consistent and redundant nature of the data across clients allows the model to quickly relearn any slightly pruned benign knowledge, facilitating faster performance recovery. On the other hand, in Non-IID settings, data heterogeneity makes it more difficult for the model to recover lost information, as the removed knowledge might be unique to a specific client or data subset. This reinforces the need for more cautious pruning in such cases, as recovering lost benign contributions is inherently more challenging.

Finally, regarding natural forgetting, its efficacy is typically lower in IID environments. Since the global model continuously encounters similar updates from all clients, it reinforces learned parameters consistently, making it harder for the model to forget outdated or maliciously injected knowledge. In contrast, in Non-IID settings, the global model is exposed to more varied updates, which encourages parameter changes that can overwrite or diminish the impact of previous harmful contributions. This diversity acts as a natural regularizer, accelerating the forgetting of adversarial influence. Our ablation study in Section~\ref{sec:ablation} confirms this behavior, showing that natural forgetting is more effective in Non-IID training.

All these considerations suggest that the data similarity can provide an approximate guideline for selecting the pruning percentage. Therefore, Equation~\ref{eq:P} can be rewritten as:
\begin{equation}
\label{eq:P_two}
    \mathcal{P} \propto  \text{SIM}
\end{equation}
where $\text{SIM} \in [0,1]$ quantifies the degree of data similarity among clients, with higher values indicating more IID-like distributions and lower values corresponding to more Non-IID settings. 
While many similarity metrics exist, in this work we choose cosine similarity on the last layer of the model, as it is one of the most commonly adopted approaches in FL literature. To improve separation and control, we normalize the similarity into a variable $ z \in [0,1] $ through the following mapping:
\begin{equation}
z = \frac{\text{SIM}_{\text{orig}} - \text{SIM}_{\min}}{\text{SIM}_{\max} - \text{SIM}_{\min}}
\end{equation}
where $\text{SIM}_{\min}$ and $\text{SIM}_{\max}$ define the range of the original similarity metric. Since we calculate similarity at convergence, we assume the original cosine similarity values fall within $[0.5, 1]$. This normalization helps standardize the similarity values, improving differentiation between Non-IID and IID cases.

To further improve separation, especially between low-to-medium and high similarity values, we introduce a nonlinear mapping for the pruning percentage $\mathcal{P}$. This nonlinear function produces small and relatively flat pruning values for low and medium similarity, capturing the Non-IID nature of the data, while steeply increasing pruning for very high similarity values corresponding to near-IID settings. We define the pruning percentage as
\begin{equation}
    \mathcal{P} \approx (\mathcal{P}_{\text{max}} - \mathcal{P}_{\text{min}}) \cdot z^{\gamma} + \mathcal{P}_{\text{min}},
\end{equation}
where $\mathcal{P}_{\text{max}}$ and $\mathcal{P}_{\text{min}}$ denote, respectively, the maximum and minimum pruning rates.  
The exponent $\gamma$ controls the steepness of the curve: larger values keep the curve relatively flat for low‑to‑mid similarity scores and make it rise sharply as $z \to 1$. This behavior yields finer separation in non‑IID scenarios while allowing more aggressive pruning when the data are highly similar (IID‑like). The parameters $\mathcal{P}_{\text{max}}$, $\mathcal{P}_{\text{min}}$, and $\gamma$ can be tuned to balance pruning aggressiveness and model stability for a given setting.  
In our experiments, we set $\mathcal{P}_{\text{max}} = 0.15$, $\mathcal{P}_{\text{min}} = 0.01$, and $\gamma = 5$, which we found to provide a good trade‑off between sensitivity and robustness. Figure~\ref{fig:pruning_curve} illustrates the nonlinear mapping from the normalized similarity $z$ to the pruning percentage $\mathcal{P}(z)$, with shaded regions indicating typical similarity ranges for IID and Non-IID data.

In our experiments, we calculated mean cosine similarities of 0.99 for IID data and 0.89 for Non-IID data, which normalize to $ z \approx 0.98 $ and $ z \approx 0.78 $, respectively. These values produce pruning percentages $\mathcal{P}$ of approximately 10\% for IID and 5\% for Non-IID cases, effectively adapting pruning aggressiveness based on data similarity.
\begin{figure}
\centering
\begin{tikzpicture}
\begin{axis}[
    width=0.9\linewidth,
    height=5cm,
    xlabel={$z$},
    ylabel={$\mathcal{P}(z)$},
    ymin=0,
    ymax=0.16,
    xmin=0,
    xmax=1,
    grid=both,
    domain=0:1,
    samples=100,
    thick,
    legend style={at={(0.47,0.9)}, anchor=north, legend columns=-1},
    scaled ticks=false, tick label style={/pgf/number format/fixed}
]

\addplot[color=blue, thick] {(0.15 - 0.01)*x^5 + 0.01};
\addlegendentry{$\mathcal{P}(z) = (\mathcal{P}_{\max} - \mathcal{P}_{\min})\, z^\gamma + \mathcal{P}_{\min}$}

\addplot [
    name path=A,
    draw=none,
    domain=0:0.9,
] {(0.15 - 0.01)*x^5 + 0.01};

\path[name path=B] (axis cs:0,0) -- (axis cs:0.9,0);

\addplot [
    blue!10,
] fill between [
    of=A and B,
    soft clip={domain=0:0.9},
];

\addplot [
    name path=C,
    draw=none,
    domain=0.9:1,
] {(0.15 - 0.01)*x^5 + 0.01};

\path[name path=D] (axis cs:0.9,0) -- (axis cs:1,0);

\addplot [
    green!10,
] fill between [
    of=C and D,
    soft clip={domain=0.8:1},
];

\node at (axis cs:0.75,0.01) { \textbf{Non-IID}};
\node at (axis cs:0.95,0.01) { \textbf{IID}};

\end{axis}
\end{tikzpicture}
\caption{Pruning rate $\mathcal{P}$ as a function of normalized similarity $z$ with $\gamma = 5$, $\mathcal{P}_{\min} = 0.01$, and $\mathcal{P}_{\max} = 0.15$. The shaded regions highlight typical ranges for Non-IID and IID  scenarios.}
\label{fig:pruning_curve}
\end{figure}
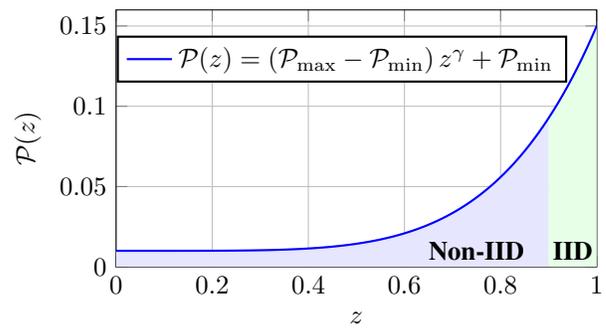

}
\subsection{Number of Recovery Rounds}
\label{sec:rec}

The recovery process involves retraining the model to restore performance after zeroing out specific weights. While predicting the exact number of recovery rounds is challenging, the upper bound is given by the number of rounds needed for retraining from scratch, denoted as $ \mathcal{R}^* $. We hypothesize that an optimal pruning percentage, $\mathcal{P}_{opt}$, exists, which effectively removes the influence of malicious data and ensures $\mathcal{R}_{\mathcal{P}_{opt}} < \mathcal{R}^*$.
Not all weights contribute equally to predicting a sample, implying a subset of weights, $ W_{malicious} $, is responsible for predicting malicious samples, $ \mathcal{D}_{malicious} $. We can determine $\mathcal{P}_{opt}$ by pruning one weight at a time, progressively increasing the pruning percentage with each iteration. As we prune more weights, the model's performance will degrade, increasing the number of recovery rounds, $\mathcal{R}$.  Weights are pruned until $ \mathcal{D}_{malicious} $ is no longer predictable. At $\mathcal{P}_{opt}$, all malicious weights are pruned, leaving only the benign ones, which still retain useful knowledge. As a result, the model requires fewer recovery rounds than retraining from scratch because it already possesses some knowledge.
{\color{black}
To estimate the number of recovery rounds required after pruning, we propose an upper bound that accounts for the extent of pruning. Specifically, the bound is defined as:
\begin{equation}
    \mathcal{R}_{\mathcal{P}} \leq \lceil \mathcal{R}^* \times \mathcal{P} \rceil
\end{equation}
where $ \mathcal{R}_{\mathcal{P}} $ denotes the number of recovery rounds, $ \mathcal{R}^* $ is the number of rounds required to retrain the model from scratch, and $ \mathcal{P} $ is the pruning percentage.
It is important to note that $ \mathcal{R}^* $ is typically higher in Non-IID scenarios due to the difficulty of aggregating heterogeneous client updates. At the same time, according to Equation~\ref{eq:P_two}, IID settings require higher pruning percentages $ \mathcal{P} $. Conversely, Non-IID scenarios tend to necessitate lower $ \mathcal{P} $ values, reflecting the greater risk of mistakenly pruning useful information. These opposing trends naturally balance: when $ \mathcal{R}^* $ is high, $ \mathcal{P} $ is low, and vice versa. This makes $ \lceil \mathcal{R}^* \times \mathcal{P} \rceil $ a practical and robust upper bound for estimating recovery rounds across data distributions.

}

{\color{black}
\subsection{Security Analysis}
In this section, we evaluate the security guarantees of FedUP under the adversarial model defined in Section~\ref{sec:threat}.
\begin{assumption}
\label{assumption:benign-majority}
In an FL system, the majority of the clients are benign. The current detection solutions (see Section~\ref{sec:detect}) cannot work if the majority of clients are malicious, because it is not possible to separate their updates from those of benign clients in the weight space.

\end{assumption}
\begin{assumption}
\label{assumption:secure_comm}
Malicious clients cannot access benign clients’ updates, and all communication between clients and the server is encrypted, ensuring confidentiality and reducing the attack surface.
\end{assumption}
\begin{theorem}
FedUP effectively removes the influence of multiple malicious clients, as long as the number of malicious clients does not exceed $\left\lfloor \frac{N - 1}{2} \right\rfloor$, where $N$ is the total number of clients.
\end{theorem}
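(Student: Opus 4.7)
The plan is to decompose the claim into three linked sub-claims matching the three phases of FedUP — detection feasibility, mask correctness, and recovery — and argue each using the assumptions and results established earlier in the paper. Throughout, I would interpret ``effectively removes'' in the operational sense already used in the abstract and Section~\ref{sec:rec}: accuracy on the malicious task is driven down to the retrain-from-scratch baseline while benign accuracy is restored within $\mathcal{R}_{\mathcal{P}}\le\lceil \mathcal{R}^{*}\mathcal{P}\rceil$ rounds.

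First, I would appeal to Assumption~\ref{assumption:benign-majority} together with Assumption~\ref{assumption:secure_comm}. Whenever the malicious set $\mathcal{M}$ satisfies $|\mathcal{M}|\le\lfloor (N-1)/2\rfloor$, the benign set $\mathcal{B}$ is a strict majority, so by Assumption~\ref{assumption:benign-majority} an upstream detector can partition the clients correctly at round $t$; Assumption~\ref{assumption:secure_comm} ensures the adversary cannot adapt to this partition by eavesdropping. Hence the inputs that Algorithm~\ref{alg:detect} receives, namely $\bar W^{B}=\mathrm{Avg}(\{W^{i,t}:i\in\mathcal{B}\})$ and $\bar W^{M}=\mathrm{Avg}(\{W^{i,t}:i\in\mathcal{M}\})$, are computed over the correct client groups.

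Second, I would formalize the pruning argument of Section~\ref{sec:prun}. Writing $\Delta^{B}=\bar W^{B}-W^{\mathrm{GM},t-1}$ and $\Delta^{M}=\bar W^{M}-W^{\mathrm{GM},t-1}$, near convergence the coordinates of $\Delta^{B}$ are small on weights shared across the benign population, whereas coordinates of $\Delta^{M}$ remain non-negligible precisely on the weights that the malicious clients are pushing away from the benign optimum. Consequently the ranking score $r_{j}=(\bar W^{M}_{j}-\bar W^{B}_{j})^{2}\,W^{\mathrm{GM},t-1}_{j}$ used on lines 5--6 of Algorithm~\ref{alg:detect} is large exactly on the subset $W_{\mathrm{malicious}}$ introduced in Section~\ref{sec:rec}. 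Picking the top-$\mathcal{P}$ fraction per dense/convolutional layer with $\mathcal{P}$ chosen as in Section~\ref{sec:thr} therefore produces a mask that covers $W_{\mathrm{malicious}}$ with high probability; applying this mask to $\bar W^{B}$ via Algorithm~\ref{alg:masking} (a) discards the current round's malicious updates because the masked weights come from $\bar W^{B}$ and not from $\bar W^{M}$, and (b) zeros the carriers of accumulated malicious influence from prior rounds.

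Third, I would invoke the recovery analysis of Section~\ref{sec:rec} to conclude that the damage to benign knowledge caused by pruning is bounded: only a fraction $\mathcal{P}$ of each targeted layer is zeroed, and the bound $\mathcal{R}_{\mathcal{P}}\le \lceil \mathcal{R}^{*}\mathcal{P}\rceil$ guarantees that a small number of additional FL rounds with the benign clients restores benign accuracy. Finally, the DoS-resilience clause of the threat model is handled by the rate-limit mechanism of Section~\ref{sec:fedup:rate}, which bounds the frequency of triggered unlearning by $\mathcal{T}$ independently of the adversary's behaviour.

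The hard part will be the second step: rigorously linking the ranking score $r_{j}$ to the unknown carrier set $W_{\mathrm{malicious}}$. A fully formal argument would require a distributional assumption on benign updates (e.g., concentration of $\Delta^{B}_{j}$ around zero on malicious-carrier coordinates) which the paper only supports empirically through Figure~\ref{fig:weights_diff} and Figure~\ref{fig:venn}. I would therefore either (i) state this concentration explicitly as a lemma and treat it as an empirical premise, or (ii) weaken the conclusion to a probabilistic statement: with probability increasing in the separability $\mathcal{S}$ of Section~\ref{sec:thr}, the mask covers a $(1-o(1))$ fraction of $W_{\mathrm{malicious}}$, which together with the recovery bound yields the theorem. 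The majority condition $|\mathcal{M}|\le\lfloor(N-1)/2\rfloor$ enters precisely to keep $\mathcal{S}$ bounded away from zero, since beyond this threshold the benign and malicious averages become statistically indistinguishable and the ranking score loses its discriminative power.
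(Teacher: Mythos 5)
Your proposal takes essentially the same route as the paper's own proof: benign strict majority makes detection possible (Assumption~\ref{assumption:benign-majority}), the convergence argument of Section~\ref{sec:prun} implies malicious updates diverge on contested weights so the dissimilarity-based ranking isolates them, and pruning plus recovery rounds removes the influence. Your version is in fact more thorough than the paper's one-paragraph argument --- in particular you correctly identify that the link between the ranking score and the true malicious-carrier weights rests only on empirical evidence, a gap the paper's proof passes over silently --- and the extra material on secure channels and rate limiting belongs to Theorems~2 and~3 rather than this one, but none of that changes the substance.
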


\begin{proof}
Let $\mathcal{C} = \mathcal{B} \cup \mathcal{M}$ be the set of all clients, where $\mathcal{B}$ is the set of benign clients and $\mathcal{M}$ the set of malicious clients. Assume that $|\mathcal{M}| \leq \left\lfloor \frac{N - 1}{2} \right\rfloor$, so that benign clients are in strict majority. Under Assumption~\ref{assumption:benign-majority}, when benign clients are the majority, malicious updates can be detected. At convergence, benign updates tend to be small and mutually consistent (as discussed in Section~\ref{sec:prun}), while malicious clients, attempting to alter the model toward their adversarial goal, produce updates that conflict with the benign trend. FedUP identifies such conflicting weights by computing update dissimilarities and pruning the most deviant ones, which are most likely to be influenced by malicious behavior. This dissimilarity-based pruning mechanism exploits the structural divergence created by a benign majority. Therefore, as long as $|\mathcal{B}| > |\mathcal{M}|$, FedUP can distinguish malicious from benign knowledge and successfully remove the adversarial influence. 
\end{proof}
\begin{theorem}
All benign model updates in FedUP are protected from eavesdropping and tampering.
\end{theorem}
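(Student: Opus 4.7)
The plan is to reduce the claim to the two security assumptions already stated in the threat model and in Assumption~\ref{assumption:secure_comm}, and then to verify that FedUP itself introduces no additional channel through which benign updates could be leaked or altered. Concretely, I would split the argument into a confidentiality part (no eavesdropping) and an integrity part (no tampering), and address each in turn.

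First, for confidentiality I would invoke Assumption~\ref{assumption:secure_comm}: every client--server channel is encrypted, and malicious clients have no direct access to the updates of benign clients. Combined with the threat model of Section~\ref{sec:threat}, in which the server is honest and the adversary has no visibility into other participants' data, this immediately rules out passive interception on the wire. The remaining subtlety is to check that FedUP does not require any auxiliary broadcast of raw benign updates: inspecting Algorithms~\ref{alg:detect} and \ref{alg:masking}, the server only consumes $localModels^t$ locally to form $avgMaliciousModels$ and $avgBenignClients$, and only the aggregated pruned model $prunedGlobalModel$ is subsequently distributed, so no individual benign weight vector ever leaves the server in cleartext.

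Second, for integrity I would argue that tampering can occur either in transit or at the server. In transit is excluded by the encrypted (and, implicitly, authenticated) channel of Assumption~\ref{assumption:secure_comm}, since any modification of ciphertext by a network-level adversary or by a colluding malicious client would be detected. At the server side, the honest-server assumption from Section~\ref{sec:threat} guarantees that the updates are consumed by Algorithms~\ref{alg:detect} and~\ref{alg:masking} as received. The pruning mask is derived deterministically from the collected $localModels^t$ and the previous global model $globalModel^{t-1}$, so a malicious client cannot bias the mask construction against a specific benign client without first corrupting either the channel or the server, both of which are excluded.

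The main obstacle I anticipate is making rigorous the claim that the unlearning procedure itself is not a side channel: a malicious client could, in principle, try to infer benign updates by probing the mask that the server later applies. I would handle this by noting that the mask reflects aggregated, averaged quantities (via $avgBenignClients$) rather than per-client weights, and by invoking the rate-limiting mechanism of Section~\ref{sec:fedup:rate}, which prevents an adversary from repeatedly triggering FedUP to extract fine-grained information about individual benign updates across rounds. Together, Assumption~\ref{assumption:secure_comm}, the honest-server assumption, and the aggregate, rate-limited nature of FedUP's use of benign weights are sufficient to conclude that benign updates are protected against both eavesdropping and tampering.
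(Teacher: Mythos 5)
Your proposal is correct and rests on the same foundation as the paper's own proof, which is a single sentence reducing the claim to Assumption~\ref{assumption:secure_comm} (encrypted client--server communication preventing eavesdropping and targeted tampering). Your additional elaborations --- the confidentiality/integrity split, the check that only aggregated quantities leave the server, and the appeal to the honest-server assumption and rate limiting --- are all consistent with the paper and strictly more detailed than what the authors actually argue.
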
 
\begin{proof}
Following Assumption~\ref{assumption:secure_comm}, all communications between clients and the central server are encrypted. By maintaining confidentiality, adversaries are prevented from crafting more effective or targeted attacks.
\end{proof}
\begin{theorem}
FedUP is resistant to DoS attacks, preventing adversarial misuse of the unlearning mechanism.
\end{theorem}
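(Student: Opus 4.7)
The plan is to formalize the DoS attack vector against the unlearning mechanism and then show, step by step, that each avenue an adversary could exploit is closed off by the design choices described in Section~\ref{sec:fedup:rate}. I would model a DoS attack as a sequence of adversarial actions intended to cause the server to invoke the unlearning routine so frequently that the model cannot converge. Under this model, the attack can succeed only if (i) the adversary can directly trigger the unlearning routine, or (ii) the adversary can indirectly induce the server to trigger it at an arbitrarily high rate.

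My first step would be to rule out case (i): by construction, unlearning in FedUP is exposed only as a server-side operation that is invoked upon detection. No client-facing API is provided for requesting unlearning, so the attack surface of case (i) is empty under the honest-server assumption stated in Section~\ref{sec:threat} and reiterated in Assumption~\ref{assumption:secure_comm}. Second, I would handle case (ii) by leveraging the explicit rate-limit threshold $\mathcal{T}$. Letting $u(t)$ denote the number of unlearning events executed by the server up to round $t$, the rate-limit mechanism guarantees the deterministic bound $u(t) \leq \lceil t/\mathcal{T}\rceil$ regardless of how many malicious clients are detected or how frequently they are flagged; consequently the worst-case per-round unlearning overhead is at most $1/\mathcal{T}$, which the system owner can make arbitrarily small.

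The third step would be to confirm that the rate-limit does not trade off effectiveness for resilience. The batching rule in Section~\ref{sec:fedup:rate} specifies that every malicious client detected during a rate-limit window is accumulated and removed in a single unlearning call once $\mathcal{T}$ elapses. Because FedUP is explicitly designed to remove the contributions of multiple colluding clients simultaneously (as used in Theorem~1 and its proof), the batched call inherits the same effectiveness guarantee, so delaying does not weaken the removal.

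The main obstacle I expect is choosing $\mathcal{T}$ large enough that the recovery rounds after one unlearning can complete before the next unlearning is permitted, yet small enough to keep responsiveness acceptable. I would address this by arguing that, because the model empirically recovers within a few rounds (as reported later in Section~\ref{sec:res}), any $\mathcal{T}$ larger than the empirical recovery horizon (for instance, $\mathcal{T}=10$) yields a concrete resilience guarantee: the adversary can inflict at most $\lceil t/\mathcal{T}\rceil$ disruptions in $t$ rounds, each of which is absorbed before the next one is admissible, preventing training hijack. Combining these three steps closes the proof.
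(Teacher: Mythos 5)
Your proposal is correct and rests on exactly the same idea as the paper's own proof, which is a single sentence invoking the rate-limit mechanism of Section~\ref{sec:fedup:rate}. Your version is simply a more detailed elaboration of that argument (server-only triggering, the bound $u(t) \leq \lceil t/\mathcal{T}\rceil$, and the batching of detections), all of which is consistent with the design described in the paper.
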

\begin{proof}
FedUP implements a rate-limiting mechanism, presented in Section~\ref{sec:fedup:rate}, to prevent attackers from continuously triggering unlearning, which could degrade the model. 
\end{proof}
}

\section{Performance Evaluation}
\label{sec:res}
In this section, we present the experiments conducted to evaluate the effectiveness and efficiency of our algorithm. This evaluation aims to demonstrate how effectively FedUP removes the contribution of one or more malicious clients and efficiently restores the original performance in as few recovery rounds as possible.  It also presents a comparison with {\color{black} two SOTA solutions} in terms of storage requirements and the number of recovery rounds needed to restore performance.

\subsection{Experimental Setup}

{\color{black}
\subsubsection{Malicious Setting}
To evaluate the effectiveness of our algorithm, we replicate SOTA adversarial settings, including label-flipping and backdoor attacks. Both attack scenarios are implemented using standard FL benchmarks and follow comparable configurations to ensure consistent evaluation. 
Malicious clients inject poisoned updates into the global model during training. Detection and removal occur in a round when malicious behavior is present, consistent with standard detection-based defense assumptions \cite{zhang2022fldetector}.  We consider two percentages of malicious clients: 30\% (with 10 and 20 clients) and $50\% - 1$ (across all client numbers).  We choose these proportions because at least 30\% of malicious clients are required to execute a successful poisoning attack \cite{tolpegin2020data} and to be consistent with our threat model. 
\\
\noindent\textbf{Label-Flipping Attack:} We reproduce the experimental setup outlined in \cite{tolpegin2020data}.  which provides a robust framework for simulating adversarial clients in FL.  We utilize the same standard datasets as the reference settings, namely CIFAR-10 \cite{Krizhevsky09learningmultiple} and FashionMNIST \cite{xiao2017fashion}, which are commonly used in the literature for benchmarking FL algorithms. This enables an assessment of unlearning performance across varying degrees of difficulty. The neural network models used in our experiments are the same as those in the reference setting. They are standard CNNs similar to the LeNet architecture \cite{lecun1998gradient}.
Malicious clients execute a label-flipping attack, an adversarial attack where the attacker deliberately changes the labels of the training data to incorrect values. This attack aims to degrade the performance of the global model by misleading the training process, causing the model to learn incorrect patterns and make erroneous predictions. 
In our experiments, each malicious client swap 10\% of its labels while keeping the remaining data correctly labeled. The labels to flip are chosen following the same criteria as the reference paper, namely, the most wrongly classified labels in a non-malicious scenario \cite{tolpegin2020data}. This approach is designed to make malicious activities more challenging to detect.
\\
\noindent\textbf{Backdoor Attack:} We replicate the setting proposed in \cite{jebreel2023fl}, which employs a ResNet-18 \cite{he2016deep} model trained on the CIFAR-10 dataset. In this setup, each attacker embeds a $3 x 3$ white pixel square in the bottom-right corner of 10\% input images as a trigger, following \cite{jebreel2023fl}. The model learns to misclassify these triggered inputs while maintaining high accuracy on clean data, thereby evaluating the algorithm’s ability to detect and remove subtle malicious contributions.
\\

{\color{black}
\subsubsection{False Positive Scenario}
FedUP relies on a detection mechanism to identify malicious clients. Although these algorithms are becoming increasingly sophisticated, it is important to consider situations in which the detection may produce incorrect results. One such case occurs when a benign client is mistakenly identified as malicious and is excluded from the training process through pruning. This is commonly known as a false positive scenario. 
A benign client is unjustifiably removed from the training process, which can negatively affect model performance. 
}
\subsubsection{Comparison with SOTA}
{\color{black} To better demonstrate the capabilities of our algorithm, we include a comparison with SOTA solutions in both malicious and benign settings. The evaluation has been extended to the benign scenario primarily due to the limited availability of publicly released source code for methods designed to operate in adversarial contexts. Nonetheless, this comparison offers valuable insights into FedUP's performance in non-adversarial environments. In the malicious setting, we evaluate FedUP against NoT \cite{khalil2025not} under a backdoor attack scenario.
NoT uses weight negation to disrupt the learned parameters to remove a participant’s contribution without requiring access to the original data or additional storage. It has also been successfully tested against backdoor attacks. Although the source code is not publicly available, its simple design allowed us to reimplement it for our \textit{backdoor attack} experimental evaluation.
For the benign setting, we evaluate FedUP against NoT and FedEraser \cite{9521274}.
In FedEraser, performing unlearning requires the participation of the server, which retains all historical updates along with information from the remaining clients. However, it cannot operate in a malicious context, and removing multiple clients simultaneously is not supported. 
For additional details, please refer to Section~\ref{sec:related}.
}
{\color{black}
\subsection{FL Parameters}
In all experiments, we use the Adam optimizer with a learning rate of 1e-3, a batch size of 32, and FedAVG as the aggregation strategy.
In the malicious settings, local training is performed for 1 epoch in the label-flipping attack scenario and extended to 3 epochs in the backdoor attack scenario, following the reference settings. We vary the number of clients to 5, 10, and 20, which reflects standard experimental setups commonly used in the FU literature \cite{romandini2024federated}. Additionally, we explore both IID and Non-IID data distributions, the latter simulated using a Dirichlet distribution ($\alpha = 1$), a widely adopted approach for representing realistic data heterogeneity across clients.
In the benign setting, we adopt the same experimental configuration as used in FedEraser, which involves a small custom CNN, the CIFAR-10 dataset split in an IID manner among 10 clients, and 5 local training epochs.
}
\subsection{Evaluation Metrics}
To evaluate the efficacy of the unlearning performance, we measure the accuracy on the to-be-forgotten data before and after the unlearning procedure. This metric is a SOTA way to understand how effectively the model has unlearned the specific data \cite{romandini2024federated}. {\color{black} After unlearning, the new model’s performance should be comparable to a baseline model $\mathcal{B}$, which is obtained by fully retraining the model without any exposure to the malicious data.
This represents the standard performance of the model, unaffected by malicious influences.}
To assess the restoration performance, we evaluate the model on the same test dataset before and after the unlearning process. The new model should maintain equivalent performance.
To evaluate efficiency, we measure the number of recovery rounds, $\mathcal{R}_{rec}$, required to restore the global model's performance, compared to the rounds needed for a retrain from scratch, $\mathcal{R}^*$. Lower values indicate faster performance recovery and, consequently, greater efficiency.
\begin{table*}[h!!!]
    \centering
    \resizebox{\textwidth}{!}{%
    \begin{tabular}{lrrrrrrrrrrrrrrrr}
            \toprule
\multirow{3}{*}{\textbf{\makecell{No. Clients\\(Malicious)}}}
    & \multicolumn{8}{c}{\textbf{IID}} & \multicolumn{8}{c}{\textbf{Non-IID}}\\
        \cmidrule(lr){2-9} \cmidrule(lr){10-17}
            & \multicolumn{2}{c}{\textbf{Test Data}} & \multicolumn{3}{c}{\textbf{Malicious Data}} & \multirow{2}{*}{\textbf{$\mathcal{R}^*$}} & \multirow{2}{*}{\textbf{$\mathcal{R}_{rec}$}}  & \multirow{2}{*}{\textbf{$\mathcal{P}_{opt}$}} &  \multicolumn{2}{c}{\textbf{Test Data}} & \multicolumn{3}{c}{\textbf{Malicious Data}} & \multirow{2}{*}{\textbf{$\mathcal{R}^*$}} & \multirow{2}{*}{\textbf{$\mathcal{R}_{rec}$}}   & \multirow{2}{*}{\textbf{$\mathcal{P}_{opt}$}} \\ 
        &  \textbf{Before} & \textbf{After} & $\mathcal{B}$ & \textbf{Before} & \textbf{After} & & & &\textbf{Before} & \textbf{After} & $\mathcal{B}$ &\textbf{Before} & \textbf{After} & & \\
         \midrule
         5 (2)  &   78 & 79 & 4 & 57 & 5 & 24
          & 1 & 10 
         & 79 & 80&  4 & 44 & 4
          & 32 &  1 & 5 
          \\
         
         10 (3) & 79  & 77&   3 & 51 & 2 &26 & 1 & 11
         &  78 & 78 & 8 & 30 & 6 &  37 &  2 & 4\\ 

         10 (4) &  78 & 78 & 4  & 59 & 3& 27 & 1 & 11 
         &77& 77 & 10 & 23 & 9 &  43 & 2 & 4\\ 

         20 (6) &  80 & 78&  4& 60 & 5 & 42 &2 & 11 
          &   79 & 77& 11 & 37  &11 & 75 & 4 & 3\\
         
         20 (9) &  79 & 79&  5& 42 & 4 & 67 & 2& 10 
          & 79 & 79 & 7 & 29 & 8 &  66 & 2 & 3\\
         \bottomrule
    \end{tabular}
    }
    \caption{Accuracy (\%) before and after unlearning on test and malicious clients' data, the number of recovery rounds, and the percentage of weights pruned (\%) used with CIFAR-10 under label-flipping attack}
    \label{tab:cifar10}
\end{table*}
\begin{table*}[h!!!]
    \centering
    \resizebox{\textwidth}{!}{%
    \begin{tabular}{lrrrrrrrrrrrrrrrrrr}
            \toprule
\multirow{3}{*}{\textbf{\makecell{No. Clients\\(Malicious)}}}
    & \multicolumn{8}{c}{\textbf{IID}} & \multicolumn{8}{c}{\textbf{Non-IID}}\\
        \cmidrule(lr){2-9} \cmidrule(lr){10-17}
            & \multicolumn{2}{c}{\textbf{Test Data}} & \multicolumn{3}{c}{\textbf{Malicious Data}} & \multirow{2}{*}{\textbf{$\mathcal{R}^*$}} & \multirow{2}{*}{\textbf{$\mathcal{R}_{rec}$}}  & \multirow{2}{*}{\textbf{$\mathcal{P}_{opt}$}} &   \multicolumn{2}{c}{\textbf{Test Data}} & \multicolumn{3}{c}{\textbf{Malicious Data}} & \multirow{2}{*}{\textbf{$\mathcal{R}^*$}} & \multirow{2}{*}{\textbf{$\mathcal{R}_{rec}$}}  & \multirow{2}{*}{\textbf{$\mathcal{P}_{opt}$}} \\ 
        & \textbf{Before} & \textbf{After} & $\mathcal{B}$ & \textbf{Before} & \textbf{After} &  & & &\textbf{Before} & \textbf{After} & $\mathcal{B}$ &\textbf{Before} & \textbf{After} & \\
         \midrule
         5 (2)   & 84 &88  &  4& 73 & 5 & 22 & 1 & 10 &
          83 & 82 & 21 & 54 & 20 &  19 & 2 & 6\\
         
         10 (3) & 85 & 87 & 4 & 65 & 3 & 34 & 1 & 10 
         & 89 & 88 & 9 & 22 &  7& 36 & 1 & 5\\

         10 (4)  & 84 & 88 & 5 & 72 & 3 & 19 & 1 & 10
           & 86 & 87 & 11 & 36  & 12 & 19 & 1 & 5\\ 

         20 (6)  & 86 & 88 & 6 & 58 &  8 & 51 & 1 & 10 & 
          90 & 89 & 5 & 16 & 4 & 64 & 2 & 5\\
         
         20 (9)  & 83 & 82 & 5 & 69 & 6 & 35 & 1 & 11 &
           89 & 88 & 5 & 24 & 6 & 41 & 2 & 5\\
         \bottomrule
    \end{tabular}
    }
    \caption{Accuracy (\%) before and after unlearning on test and malicious clients' data, the number of recovery rounds, and the percentage of weights pruned (\%) used with Fashion-MNIST under label-flipping attack}
    \label{tab:cifar100}
\end{table*}

\begin{table*}
    \centering
    \resizebox{\linewidth}{!}{%
    \begin{tabular}{lrrrrrrrrrrrrrrrrrr}
            \toprule
\multirow{3}{*}{\textbf{\makecell{No. Clients\\(Malicious)}}}
    & \multicolumn{8}{c}{\textbf{IID}} & \multicolumn{8}{c}{\textbf{Non-IID}}\\
        \cmidrule(lr){2-9} \cmidrule(lr){10-17}
            & \multicolumn{2}{c}{\textbf{Test Data}} & \multicolumn{3}{c}{\textbf{Malicious Data}} & \multirow{2}{*}{\textbf{$\mathcal{R}^*$}} & \multirow{2}{*}{\textbf{$\mathcal{R}_{rec}$}}  & \multirow{2}{*}{\textbf{$\mathcal{P}_{opt}$}} &   \multicolumn{2}{c}{\textbf{Test Data}} & \multicolumn{3}{c}{\textbf{Malicious Data}} & \multirow{2}{*}{\textbf{$\mathcal{R}^*$}} & \multirow{2}{*}{\textbf{$\mathcal{R}_{rec}$}}  & \multirow{2}{*}{\textbf{$\mathcal{P}_{opt}$}} \\ 
        & \textbf{Before} & \textbf{After} & $\mathcal{B}$ & \textbf{Before} & \textbf{After} &  & & &\textbf{Before} & \textbf{After} & $\mathcal{B}$ &\textbf{Before} & \textbf{After} & \\
         \midrule
         5 (2) &  75 & 75 &  1 & 88 & 2 & 33 & 2 & 10 
        & 75 & 73 & 4 & 93 & 5 &  40 & 2 & 5\\
        10 (3) & 76  & 75  & 3 & 81 & 4 & 31 & 2 & 10 
        & 75 & 74 & 2 & 82 & 2 & 32  &  2 & 4\\
        10 (4) &  75 & 73 & 1  & 89 &  2 & 26 & 1 & 10 & 74& 73 & 3 & 90 & 3 &  23 & 1 & 5\\
        20 (6) & 75& 73 & 3&83 & 4& 46& 3 & 10 & 73 & 73 & 2 &73 & 3 & 56 & 2 & 3\\
        20 (9) &  75 & 74&  2& 93 & 3 & 37 & 3 & 12 & 73 & 71 & 2 & 90 & 3 &  47 & 2 & 3\\
        \bottomrule
    \end{tabular}
    }
    \caption{Accuracy (\%) before and after unlearning on test and malicious clients' data, the number of recovery rounds, and the percentage of weights pruned (\%) used with CIFAR-10 under backdoor attack}
    \label{tab:backdoor}
\end{table*}
\subsection{Results}
This section presents the results of FedUP under malicious settings and false flag scenarios, and compares its performance with SOTA solutions.
{\color{black}
\subsubsection{Performance under Malicious Setting}
Tables~\ref{tab:cifar10} and~\ref{tab:cifar100} show the results of FedUP with CIFAR-10 and Fashion-MNIST in both IID and Non-IID settings  under label-flipping attack. Table~\ref{tab:backdoor} instead shows the results of FedUP with CIFAR-10 in both IID and Non-IID settings under backdoor attack. These tables present the performance metrics of the global model before and after applying FedUP, specifically the accuracy obtained on the test set and on the malicious data. They also highlight the number of recovery rounds, $\mathcal{R}_{rec}$, required to achieve these results.
As shown, FedUP effectively eliminates the influence of malicious data, as evidenced by the significant drop in the accuracy on their data. Meanwhile, the performance on the test set is almost completely restored in nearly all cases. The performance achieved is comparable to that of the baseline.
The results are consistent with those observed across all settings and datasets used. Specifically, a notable decrease in performance on data from malicious clients is evident, while the performance on the test set remains essentially unchanged. 

Regarding the number of recovery rounds, the recovery process is generally very fast in all cases. However, the Non-IID case typically requires more recovery rounds on average. This is due to the highly heterogeneous data distribution across clients in Non-IID settings, which makes the updates less stable and the contributions more overlapped. As a result, pruning also removes more benign knowledge, necessitating additional iterations to recover the model effectively. Nevertheless, the number of recovery rounds is considerably smaller than those needed to retrain from scratch.
In all cases, the bound suggested in Section~\ref{sec:rec} holds.

The selection of the percentage of weights to prune follows the guidelines presented in Section~\ref{sec:thr}, demonstrating their effectiveness. In fact, the IID settings require higher percentages of pruning to remove more weights in all cases. In contrast, the Non-IID settings demand lower percentages of pruning due to the presence of more conflicting weights and a more efficient natural forgetting.
{\color{black} In the tables, we report the optimal pruning percentage $\mathcal{P}_{opt}$ that best balances FedUP’s performance in each specific scenario to highlight its capabilities at their best. As shown, these values closely align with the general guidelines, and in many cases, they are identical to them. Nevertheless, FedUP is resilient to variations in the pruning percentage. Notably, using a percentage slightly higher or lower than optimal does not significantly affect the overall outcome. In the former case, a little too much benign knowledge might be removed, requiring a few recovery rounds to restore performance. In the latter case, some malicious knowledge may remain, and additional rounds would be needed to ensure it is fully forgotten through natural training dynamics.}
}

{\color{black}
\subsubsection{False Positive Scenario} We test this scenario in the backdoor attack setting described earlier{\color{black}, ensuring that the removal of a benign client does not result in an even split between benign and malicious clients, which would completely disrupt the training process and violate Assumption~\ref{assumption:benign-majority}. }
Table~\ref{tab:fps} presents the results for a false positive scenario. The forgetting data refers to the data held by the benign client that was mistakenly excluded from the process. The pruning percentages used are the same as in the respective settings of Table~\ref{tab:backdoor}. As shown, pruning the weights associated with a benign client does not completely disrupt the model. On the contrary, performance is recovered after just a few additional training rounds. Notably, the accuracy on the forgetting data, although it decreases, still remains higher than the accuracy on the test data. This indicates that the data is not entirely forgotten, which is expected and desirable, as the information is valid and beneficial. In fact, FedUP prunes a small percentage of weights that exhibit significant deviation from the collective average. In the case of a false positive, the weights of the benign client would, on average, be close to those of the benign cluster. Thus, pruning would predominantly remove only those weights with substantial deviations, which are likely to be less critical to the model’s general performance. As shown, experimental evidence illustrates that the model can recover its utility within a few additional training rounds, even when benign contributions are pruned. Therefore, the impact of accidental benign client removal is limited and recoverable.
\begin{table}[t]
    \centering
    \resizebox{\linewidth}{!}{%
    \begin{tabular}{llrrr}
    \toprule
        & \multirow{2}{*}{\textbf{\makecell{No. Clients\\(Malicious)}}}
          & \textbf{Test Data}  &\textbf{Forgetting Data} &  \multirow{2}{*}{\textbf{$\mathcal{R}_{rec}$}} \\
         & & \textbf{Before}/\textbf{After} & \textbf{Before}/\textbf{After} & \\
         \midrule
         \multirow{2}{*}{\textbf{IID}} & 10 (4) $\rightarrow$ 9 (4)  & 76 / 76  &  99 / 86 & 2 \\
         & 20 (9) $\rightarrow$ 19 (9)   & 75 / 74 & 98 / 83   &3 \\
        \midrule
         \multirow{2}{*}{\textbf{Non-IID}} &
         10 (4) $\rightarrow$ 9 (4)  & 75 / 74  & 91 / 80& 2\\
        &  20 (9) $\rightarrow$ 19 (9)  & 75 / 73 & 87 / 82 & 5\\

         \bottomrule
    \end{tabular}
    }
    \caption{Model accuracy (\%) before and after a pruning benign client contribution in false positive scenario.
}
    \label{tab:fps}
\end{table}

\begin{table}[t]
    \centering
    \resizebox{\linewidth}{!}{%
    \begin{tabular}{llrrrr}
    \toprule
        \multirow{4}{*}{\textbf{\makecell{No. Clients\\(Malicious)}}} & 
         \multirow{4}{*}{\textbf{Work}} & \multicolumn{2}{c}{\textbf{IID}} & \multicolumn{2}{c}{\textbf{Non-IID}} \\
         \cmidrule(lr){3-4} \cmidrule(lr){5-6}
         &&\textbf{Test Data}  & \multirow{2}{*}{\textbf{$\mathcal{R}_{rec}$}} & \textbf{Test Data} & \multirow{2}{*}{\textbf{$\mathcal{R}_{rec}$}} \\
         & & \textbf{Before}/\textbf{After} & &\textbf{Before}/\textbf{After} &\\
         \midrule
         \multirow{2}{*}{5 (2) } & FedUP & 75 / 75  & 2 & 75 / 73  & 2\\
         & NoT & 75 / 75 & 10 & 75 / 71  & 20\\
         \multirow{2}{*}{10 (3) } & FedUP & 76 / 75  & 2 & 75 / 74  & 2\\
         & NoT & 76 / 75  & 5 & 75 / 73  & 20\\
         \multirow{2}{*}{10 (4) } & FedUP & 75 / 75  & 1 & 74 / 73  & 1\\
         & NoT & 75 / 75 & 6 & 74 / 72  & 20\\
         \multirow{2}{*}{20 (8) } & FedUP & 75 / 75 & 2 & 75 / 73 & 2\\
         & NoT & 75 / 75 & 15 & 75 / 72  & 20\\
         \multirow{2}{*}{20 (9) } & FedUP & 75 / 74  & 3 & 71 / 71  & 2\\
         & NoT & 75 / 73  & 10 & 71 / 71  & 12\\
         \bottomrule
    \end{tabular}
    }
    \caption{Comparison with NoT showing the accuracy (\%) on test and forgetting data before and after unlearning, the number of recovery rounds, and the storage needed (MB) under backdoor attack setting.}
    \label{tab:comparison_backdoor}
\end{table}

\begin{table}[t]
    \centering
    \resizebox{\linewidth}{!}{%
    \begin{tabular}{lrrrr}
    \toprule
         \multirow{2}{*}{\textbf{Work}} &\textbf{Test Data} &\textbf{Forgetting Data} & \multirow{2}{*}{\textbf{$\mathcal{R}_{rec}$}} & \multirow{2}{*}{\textbf{\makecell[r]{Storage\\(MB)}}} \\
         & \textbf{Before}/\textbf{After} & \textbf{Before}/\textbf{After} & & \\
         \midrule
         FedEraser & 56 / 56 & 76 / 56 & 20 &  54\\
         NoT & 56 / 50 & 74 / 50 & 20 & 0.2 \\
         FedUP & 56 / 56 & 74 / 56 & 3 & 2.7\\
         \bottomrule
    \end{tabular}
    }
    \caption{Comparison with FedEraser and NoT, reporting test and forgetting accuracy (\%) before and after unlearning, number of recovery rounds, and storage requirements (MB) in a benign setting with 10 clients and IID data.}
    \label{tab:comparison_benign}
\end{table}

\begin{figure*}[t]
\centering
\begin{subfigure}{0.49\textwidth}
\begin{subfigure}{0.49\textwidth}
\centering
\begin{tikzpicture}
\begin{axis}[
    title={IID},
    xlabel={Epochs},
    ylabel={Accuracy},
    xmin=0, xmax=10,
    ymin=0, ymax=1,
    grid=major,
    grid style={dashed, gray!50},
    legend style={at={(0.5,0)}, anchor=west, font=\small},
    width=\textwidth,
    height=4.5cm,
    cycle list name=color list,
    table/col sep=comma
]

\addplot[name path=test_q1, draw=none, forget plot] table[x=epoch, y=test_q1] {csv/plot_data_iid.csv};
\addplot[name path=test_q3, draw=none, forget plot] table[x=epoch, y=test_q3] {csv/plot_data_iid.csv};
\addplot[red!20, fill opacity=0.5] fill between[of=test_q1 and test_q3];

\addplot[name path=malicious_q1, draw=none, forget plot] table[x=epoch, y=malicious_q1] {csv/plot_data_iid.csv};
\addplot[name path=malicious_q3, draw=none, forget plot] table[x=epoch, y=malicious_q3] {csv/plot_data_iid.csv};
\addplot[blue!20, fill opacity=0.5] fill between[of=malicious_q1 and malicious_q3];

\addplot[name path=baseline_q1, draw=none, forget plot] table[x=epoch, y=baseline_q1] {csv/plot_data_iid.csv};
\addplot[name path=baseline_q3, draw=none, forget plot] table[x=epoch, y=baseline_q3] {csv/plot_data_iid.csv};
\addplot[green!20, fill opacity=0.5] fill between[of=baseline_q1 and baseline_q3];

\addplot[red!50!white, thick] table[x=epoch, y=test_median] {csv/plot_data_iid.csv};

\addplot[blue!50!white, thick] table[x=epoch, y=malicious_median] {csv/plot_data_iid.csv};

\addplot[green!60!black!50!white, thick] table[x=epoch, y=baseline_median] {csv/plot_data_iid.csv};

\end{axis}
\end{tikzpicture}
\end{subfigure}
\hfill
\begin{subfigure}{0.49\textwidth}
\centering
\begin{tikzpicture}
\begin{axis}[
    title={Non-IID},
    xlabel={Epochs},
    xmin=0, xmax=10,
    ymin=0, ymax=1,
    grid=major,
    grid style={dashed, gray!50},
        legend style={at={(0.6,0.5)}, anchor=west, font=\footnotesize},
    width=\textwidth,
    height=4.5cm,
    cycle list name=color list,
    table/col sep=comma
]

\addplot[name path=test_q1, draw=none, forget plot] table[x=epoch, y=test_q1] {csv/plot_data_non_iid.csv};
\addplot[name path=test_q3, draw=none, forget plot] table[x=epoch, y=test_q3] {csv/plot_data_non_iid.csv};
\addplot[red!20, fill opacity=0.5] fill between[of=test_q1 and test_q3];

\addplot[name path=malicious_q1, draw=none, forget plot] table[x=epoch, y=malicious_q1] {csv/plot_data_non_iid.csv};
\addplot[name path=malicious_q3, draw=none, forget plot] table[x=epoch, y=malicious_q3] {csv/plot_data_non_iid.csv};
\addplot[blue!20, fill opacity=0.5] fill between[of=malicious_q1 and malicious_q3];

\addplot[name path=baseline_q1, draw=none, forget plot] table[x=epoch, y=baseline_q1] {csv/plot_data_non_iid.csv};
\addplot[name path=baseline_q3, draw=none, forget plot] table[x=epoch, y=baseline_q3] {csv/plot_data_non_iid.csv};
\addplot[green!20, fill opacity=0.5] fill between[of=baseline_q1 and baseline_q3];

\addplot[red!50!white, thick] table[x=epoch, y=test_median] {csv/plot_data_non_iid.csv};
\addlegendentry{Test Data}

\addplot[blue!50!white, thick] table[x=epoch, y=malicious_median] {csv/plot_data_non_iid.csv};
\addlegendentry{Malicious Data}

\addplot[green!60!black!50!white, thick] table[x=epoch, y=baseline_median] {csv/plot_data_non_iid.csv};
\addlegendentry{$\mathcal{B}$}

\end{axis}
\end{tikzpicture}
\end{subfigure}
\caption{Ablation study using only natural forgetting.}
\label{fig:ablation_non_iid}
\end{subfigure}
\hfill
\begin{subfigure}{0.49\textwidth}
\begin{subfigure}{0.49\textwidth}
\centering
\begin{tikzpicture}
\begin{axis}[
    title={IID},
    xlabel={Epochs},
    xmin=0, xmax=10,
    ymin=0, ymax=1,
    grid=major,
    grid style={dashed, gray!50},
    legend style={at={(0.55,0.55)}, anchor=west, font=\small},
    width=\textwidth,
    height=4.5cm,
    cycle list name=color list,
    table/col sep=comma
]

\addplot[name path=test_q1, draw=none, forget plot] table[x=epoch, y=test_q1] {csv/plot_data_iid_random.csv};
\addplot[name path=test_q3, draw=none, forget plot] table[x=epoch, y=test_q3] {csv/plot_data_iid_random.csv};
\addplot[red!20, fill opacity=0.5] fill between[of=test_q1 and test_q3];

\addplot[name path=malicious_q1, draw=none, forget plot] table[x=epoch, y=malicious_q1] {csv/plot_data_iid_random.csv};
\addplot[name path=malicious_q3, draw=none, forget plot] table[x=epoch, y=malicious_q3] {csv/plot_data_iid_random.csv};
\addplot[blue!20, fill opacity=0.5] fill between[of=malicious_q1 and malicious_q3];

\addplot[name path=baseline_q1, draw=none, forget plot] table[x=epoch, y=baseline_q1] {csv/plot_data_iid_random.csv};
\addplot[name path=baseline_q3, draw=none, forget plot] table[x=epoch, y=baseline_q3] {csv/plot_data_iid_random.csv};
\addplot[green!20, fill opacity=0.5] fill between[of=baseline_q1 and baseline_q3];

\addplot[red!50!white, thick] table[x=epoch, y=test_median] {csv/plot_data_iid_random.csv};

\addplot[blue!50!white, thick] table[x=epoch, y=malicious_median] {csv/plot_data_iid_random.csv};

\addplot[green!60!black!50!white, thick] table[x=epoch, y=baseline_median] {csv/plot_data_iid_random.csv};

\end{axis}
\end{tikzpicture}
\end{subfigure}
\hfill
\begin{subfigure}{0.49\textwidth}
\centering
\begin{tikzpicture}
\begin{axis}[
    title={Non-IID},
    xlabel={Epochs},
    xmin=0, xmax=10,
    ymin=0, ymax=1,
    grid=major,
    grid style={dashed, gray!50},
    legend style={at={(0.45,0.5)}, anchor=west, font=\footnotesize},
    width=\textwidth,
    height=4.5cm,
    cycle list name=color list,
    table/col sep=comma
]

\addplot[name path=test_q1, draw=none, forget plot] table[x=epoch, y=test_q1] {csv/plot_data_non_iid_random.csv};
\addplot[name path=test_q3, draw=none, forget plot] table[x=epoch, y=test_q3] {csv/plot_data_non_iid_random.csv};
\addplot[red!20, fill opacity=0.5] fill between[of=test_q1 and test_q3];

\addplot[name path=malicious_q1, draw=none, forget plot] table[x=epoch, y=malicious_q1] {csv/plot_data_non_iid_random.csv};
\addplot[name path=malicious_q3, draw=none, forget plot] table[x=epoch, y=malicious_q3] {csv/plot_data_non_iid_random.csv};
\addplot[blue!20, fill opacity=0.5] fill between[of=malicious_q1 and malicious_q3];

\addplot[name path=baseline_q1, draw=none, forget plot] table[x=epoch, y=baseline_q1] {csv/plot_data_non_iid_random.csv};
\addplot[name path=baseline_q3, draw=none, forget plot] table[x=epoch, y=baseline_q3] {csv/plot_data_non_iid_random.csv};
\addplot[green!20, fill opacity=0.5] fill between[of=baseline_q1 and baseline_q3];

\addplot[red!50!white, thick] table[x=epoch, y=test_median] {csv/plot_data_non_iid_random.csv};

\addplot[blue!50!white, thick] table[x=epoch, y=malicious_median] {csv/plot_data_non_iid_random.csv};

\addplot[green!60!black!50!white, thick] table[x=epoch, y=baseline_median] {csv/plot_data_non_iid_random.csv};

\end{axis}
\end{tikzpicture}
\end{subfigure}
\caption{Ablation study using random pruning.}
\label{fig:ablation_random}
\end{subfigure}\\
\begin{subfigure}{0.49\textwidth}
\begin{subfigure}{0.49\textwidth}
\centering
\begin{tikzpicture}
\begin{axis}[
    title={IID},
    xlabel={Epochs},
    xmin=0, xmax=10,
    ymin=0, ymax=1,
    grid=major,
    grid style={dashed, gray!50},
    legend style={at={(0.55,0.55)}, anchor=west, font=\small},
    width=\textwidth,
    height=4.5cm,
    cycle list name=color list,
    table/col sep=comma
]

\addplot[black, dashed, thick, domain=0:10] {0.74};
\addplot[name path=test_q1, draw=none, forget plot] table[x=epoch, y=test_q1] {csv/plot_data_iid_highest.csv};
\addplot[name path=test_q3, draw=none, forget plot] table[x=epoch, y=test_q3] {csv/plot_data_iid_highest.csv};
\addplot[red!20, fill opacity=0.5] fill between[of=test_q1 and test_q3];

\addplot[name path=malicious_q1, draw=none, forget plot] table[x=epoch, y=malicious_q1] {csv/plot_data_iid_highest.csv};
\addplot[name path=malicious_q3, draw=none, forget plot] table[x=epoch, y=malicious_q3] {csv/plot_data_iid_highest.csv};
\addplot[blue!20, fill opacity=0.5] fill between[of=malicious_q1 and malicious_q3];

\addplot[name path=baseline_q1, draw=none, forget plot] table[x=epoch, y=baseline_q1] {csv/plot_data_iid_highest.csv};
\addplot[name path=baseline_q3, draw=none, forget plot] table[x=epoch, y=baseline_q3] {csv/plot_data_iid_highest.csv};
\addplot[green!20, fill opacity=0.5] fill between[of=baseline_q1 and baseline_q3];

\addplot[red!50!white, thick] table[x=epoch, y=test_median] {csv/plot_data_iid_highest.csv};

\addplot[blue!50!white, thick] table[x=epoch, y=malicious_median] {csv/plot_data_iid_highest.csv};

\addplot[green!60!black!50!white, thick] table[x=epoch, y=baseline_median] {csv/plot_data_iid_highest.csv};

\end{axis}
\end{tikzpicture}
\end{subfigure}
\hfill
\begin{subfigure}{0.49\textwidth}
\centering
\begin{tikzpicture}
\begin{axis}[
    title={Non-IID},
    xlabel={Epochs},
    xmin=0, xmax=10,
    ymin=0, ymax=1,
    grid=major,
    grid style={dashed, gray!50},
    legend style={at={(0.45,0.5)}, anchor=west, font=\footnotesize},
    width=\textwidth,
    height=4.5cm,
    cycle list name=color list,
    table/col sep=comma
]

\addplot[black, dashed, thick, domain=0:10] {0.74};

\addplot[name path=test_q1, draw=none, forget plot] table[x=epoch, y=test_q1] {csv/plot_data_non_iid_highest.csv};
\addplot[name path=test_q3, draw=none, forget plot] table[x=epoch, y=test_q3] {csv/plot_data_non_iid_highest.csv};
\addplot[red!20, fill opacity=0.5] fill between[of=test_q1 and test_q3];

\addplot[name path=malicious_q1, draw=none, forget plot] table[x=epoch, y=malicious_q1] {csv/plot_data_non_iid_highest.csv};
\addplot[name path=malicious_q3, draw=none, forget plot] table[x=epoch, y=malicious_q3] {csv/plot_data_non_iid_highest.csv};
\addplot[blue!20, fill opacity=0.5] fill between[of=malicious_q1 and malicious_q3];

\addplot[name path=baseline_q1, draw=none, forget plot] table[x=epoch, y=baseline_q1] {csv/plot_data_non_iid_highest.csv};
\addplot[name path=baseline_q3, draw=none, forget plot] table[x=epoch, y=baseline_q3] {csv/plot_data_non_iid_highest.csv};
\addplot[green!20, fill opacity=0.5] fill between[of=baseline_q1 and baseline_q3];

\addplot[red!50!white, thick] table[x=epoch, y=test_median] {csv/plot_data_non_iid_highest.csv};

\addplot[blue!50!white, thick] table[x=epoch, y=malicious_median] {csv/plot_data_non_iid_highest.csv};

\addplot[green!60!black!50!white, thick] table[x=epoch, y=baseline_median] {csv/plot_data_non_iid_highest.csv};

\end{axis}
\end{tikzpicture}
\end{subfigure}
\caption{Ablation study using targeted pruning based on the highest malicious weight magnitudes.}
\label{fig:ablation_targeted}
\end{subfigure}
\caption{(a) Effect of natural forgetting after removing malicious clients without using FedUP. (b) Effect of random pruning for unlearning. (c) Effect of targeted pruning based on the highest malicious magnitude weights. The plots show accuracy on test data (red), accuracy on malicious data (blue), and baseline malicious accuracy after full retraining (green). Results are averaged across varying percentages of malicious clients.}
\label{fig:ablation_combined}
\end{figure*}

\subsubsection{Comparison with SOTA}
Table~\ref{tab:comparison_backdoor} shows the comparison with NoT under the backdoor attack setting. The accuracy column on malicious data is omitted because both algorithms effectively remove malicious contributions (for FedUP results, refer to Table~\ref{tab:backdoor}). The main difference lies in how efficiently each solution recovers the original model performance. FedUP restores the performance lost during pruning within a few rounds, while NoT struggles to do so and, in some cases, fails to fully recover even after many rounds. This trend is especially evident in the Non-IID setting. This is because FedUP targets specific malicious weights, whereas NoT disrupts the entire first layer indiscriminately. Such broad disruption is highly detrimental, requiring significantly more recovery rounds to regain original performance.

Table~\ref{tab:comparison_benign} presents, instead, the comparison with NoT and FedEraser in the benign setting. The pruning percentage selected for FedUP is $\mathcal{P} = 30\%$. This is a particularly unique configuration, as it is both IID and does not involve malicious clients or clients with distinct data. Even in this context, the percentage of weights to prune is consistent with expectations. 
All algorithms successfully unlearn the client's data, as evidenced by the significant drop in the global model's performance on that client's data. Additionally, they can recover the performance lost during the unlearning process, restoring the model’s effectiveness on retained data. However, the key difference is that while FedEraser and NoT fail to rapidly recover the original performance, FedUP fully restores the model's performance in just a few recovery rounds.
In contrast, bot NoT and FedEraser require at least 20 additional training rounds to achieve similar results. NoT even fails to fully recover the lost performance, as the test accuracy after the recovery rounds remains below the original level. Moreover, FedUP requires only $1/20$ of the storage needed by FedEraser to execute, as it only requires the updates and the global model from the last training round. In this regard, NoT is the most efficient solution since it requires only the global model, although its efficiency is not significantly different from that of FedUP. 
This comparison highlights how our algorithm is storage-efficient and significantly faster than SOTA solutions while remaining equally effective. Furthermore, this demonstrates that, despite being specifically designed for malicious environments, FedUP can also be effectively applied in benign settings as well. }}

\subsubsection{Ablation Study}
\label{sec:ablation}
{\color{black}To assess the impact of FedUP, we conducted three ablation studies, shown in Figure~\ref{fig:ablation_combined}. In the first study, malicious clients are excluded from training, but no pruning is applied, so the only mitigating factor is natural forgetting. In the second study, a percentage of weights is randomly pruned, following the same threshold guidelines used in our experiments, but without selecting weights based on their contribution. The third study applies the same pruning percentage but targets the weights with the highest estimated malicious magnitude, simulating a basic yet focused unlearning strategy. All experiments use the previously described backdoor attack setting and evaluate various proportions of malicious clients under both IID and Non-IID data distributions. Due to space constraints, results are aggregated, with variance across the different configurations shown separately for the IID and Non-IID cases. In the figures, red indicates accuracy on test data, blue shows accuracy on malicious data, and green represents the baseline $\mathcal{B}$ accuracy on malicious data achieved through full retraining. At epoch 0, the malicious clients are removed, and 10 subsequent epochs represent the recovery rounds.

As shown, natural forgetting can initially reduce some of the malicious influence on the model weights. However, consistent with findings in prior work \cite{romandini2024federated}, it is not sufficient to completely eliminate the impact of malicious updates. Random pruning can help improve unlearning performance, but its effectiveness remains limited. This is likely because the pruning percentage is limited and the weights removed are often either benign or insignificant. Nevertheless, these two approaches fail to fully eliminate malicious contributions. Targeted pruning based on the highest magnitude weights produces better results, confirming that selecting meaningful weights is crucial. Although this is the only method that effectively removes malicious contributions, it struggles to quickly restore performance, typically requiring between 8 and 10 rounds, which is significantly more than FedUP and sometimes failing to reach the original accuracy. This is because it focuses solely on the malicious weights without considering benign ones, which risks removing weights that are important for benign clients. These experiments demonstrate that each component of FedUP is essential for effective and reliable unlearning of malicious contributions.
}

\section{Conclusion}
\label{sec:conclusion}
In this paper, we introduced FedUP, a novel FU algorithm specifically designed to effectively and efficiently remove malicious contributions in adversarial FL environments. Unlike conventional FU methods that assume cooperative clients, FedUP operates under a strong adversarial threat model where up to 50\% - 1 of clients may be malicious and colluding. Our approach leverages a lightweight pruning mechanism that zeros out the highest-magnitude weights most dissimilar between benign and malicious client updates, effectively isolating malicious influence despite overlapping updates. To recover from the performance loss caused by pruning, FedUP performs a limited number of training rounds, avoiding costly retraining from scratch. Extensive experiments across IID and Non-IID settings, multiple models, and two attack types (label-flipping and backdoor) show that FedUP reliably reduces malicious impact and restores performance on benign data. Additionally, it outperforms SOTA FU techniques in both speed and storage efficiency, making it a practical solution for real-world FL deployments.
\bibliographystyle{ieeetr}
\bibliography{main}
\end{document}